\documentclass[10pt]{article} 
\usepackage[preprint]{tmlr}

\usepackage{amsmath,amsfonts,bm}

\def\eqref#1{equation~\ref{#1}}

\def\1{\bm{1}}

\DeclareMathAlphabet{\mathsfit}{\encodingdefault}{\sfdefault}{m}{sl}
\SetMathAlphabet{\mathsfit}{bold}{\encodingdefault}{\sfdefault}{bx}{n}

\newcommand{\R}{\mathbb{R}}

\DeclareMathOperator*{\argmax}{arg\,max}

\usepackage[hypertexnames=false]{hyperref}
\usepackage{url}

\usepackage{amsmath, amssymb, amsthm}
\usepackage{algorithm}
\usepackage{algpseudocode}
\usepackage{multirow}
\usepackage{booktabs}
\usepackage{float}

\newtheorem{definition}{Definition}
\newtheorem{lemma}{Lemma}
\newtheorem{theorem}{Theorem}

\newcommand{\Xin}{\mathcal{X}_{in}}
\newcommand{\Reach}{\mathcal{R}}

\title{TNODEV: Toolbox for Neural ODE Verification}

\author{\name Abdelrahman Sayed Sayed \email abdelrahman.ibrahim@univ-eiffel.fr \\
      \addr Univ Gustave Eiffel, COSYS-ESTAS, F-59657 Villeneuve d’Ascq, France\\
      \AND
      \name Pierre-Jean Meyer \email pierre-jean.meyer@univ-eiffel.fr \\
      \addr Univ Gustave Eiffel, COSYS-ESTAS, F-59657 Villeneuve d’Ascq, France
      \AND
      \name Mohamed Ghazel \email mohamed.ghazel@univ-eiffel.fr\\
      \addr Univ Gustave Eiffel, COSYS-ESTAS, F-59657 Villeneuve d’Ascq, France
      }

\def\month{MM}  
\def\year{YYYY} 
\def\openreview{\url{https://openreview.net/forum?id=XXXX}} 

\usepackage{soul}

\begin{document}

\maketitle

\begin{abstract}

Neural ordinary differential equations (neural ODE) gained attention in safety critical settings such as continuous-time controllers for cyber-physical systems and classifiers integrated into automated decision pipelines, raising the question whether their behavior can be formally verified. Existing tools dedicated to neural ODE provide only a single reachability call without iterative input-set refinement, limiting the precision of their verdicts to whatever one reachability call can deliver. We present TNODEV, the first formal verifier for neural ODE that integrates a falsification checker, a fast interval-based reachability backend based on continuous-time mixed monotonicity, a verification and refinement loop with three input-set splitting heuristics, and a parallel scheduler in a single end-to-end pipeline. TNODEV supports safe-set inclusion verification on pure neural ODE, neural ODE in closed loop with a neural network controller and general neural ODE (GNODE), with the safe set specified either as an interval or as the half-space intersection induced by a target classification label. We evaluate TNODEV on a range of benchmarks across safe-set inclusion and classification-robustness properties, including a direct reachability comparison against NNV 2.0 and CORA and a verification comparison against NNV 2.0 on MNIST general neural ODE classifiers.
\end{abstract}


\section{Introduction}
\label{sect:introduction}

Neural ordinary differential equations (neural ODE) \citep{rico1992discrete,chen2018neural} parameterize the derivative of a hidden state by a neural network, so that the forward pass is the solution of an initial value problem rather than the application of a finite stack of layers. Concretely, we consider the autonomous neural ODE

\begin{equation}
\label{eq:node}
\dot{x}(t) \;=\; \frac{\mathrm{d}x(t)}{\mathrm{d}t} \;=\; f(x(t)), \qquad x(0) = u,
\end{equation}

in which the right-hand side $f$ is parameterized by a neural network. This continuous-time formulation has been shown to be effective across a range of tasks such as time series modeling \citep{rubanova2019latent}, generative modeling \citep{grathwohl2018ffjord}, and as continuous-depth substitutes for residual networks \cite{chen2018neural}. As neural ODE are finding applications in higher-stakes settings, including the controllers of safety critical cyber-physical systems \citep{lechner2020neural,xiao2023forward} and classifiers integrated into automated decision pipelines \citep{moon2022survlatent,qian2021integrating}, the question of whether their behavior can be formally verified becomes increasingly relevant.

Formal verification of neural ODE differs from formal verification of standard neural networks in two ways. First, the forward pass is the solution of a continuous-time initial value problem, which means that bounding the reachable set requires propagating an input set through the flow of the differential equation rather than through a finite stack of discrete layers. Second, the trade-offs of standard neural network verification, such as the choice of set representation, the management of branching in non-piecewise-linear activations, and the use of input-set refinement to recover precision lost to over-approximation, all carry over to the neural ODE setting but interact differently with the continuous-time dynamics. As a result, the available verification tools for neural ODE are much less mature than their neural network counterparts, and existing tools differ substantially in the type of guarantees they provide and in the verification workflow they expose to the user.

A look at the current literature (Table~\ref{tab:tool-comparison}) shows that all existing tools dedicated to neural ODE contain only a reachability step, returning a single sound over-approximation of the reachable set per query. Currently, no tool implements the full verification pipeline that has become standard in the neural network verification literature \citep{liu2021algorithms,bunel2020branch,wang2021beta}. Such a pipeline consists of a falsification pass that searches for explicit counter-examples, a reachability call that computes a sound over-approximation, and an iterative input-set refinement loop that splits inconclusive cells and re-runs the reachability call on each sub-cell until the property is certified or a maximum number of refinement iterations is reached. The closest existing tool to a full pipeline is NNV 2.0 \citep{lopez2023nnv}, which checks safety specifications on the output of a single reachability call but does not perform input-set refinement. Since current neural ODE verifiers omit this refinement step, their verification decision is based on a single reachability call, whose over-approximation might be too conservative to guarantee the satisfaction of the desired properties, and no attempt is made to tighten it in the hope of improving the verification result.

In this paper, we present \textbf{TNODEV}, a formal verifier for neural ODE that combines in a single end-to-end pipeline and for the first time in the neural ODE literature: a falsification module, a reachability backend, an iterative input-set refinement loop with three splitting heuristics, and a parallel scheduler. TNODEV uses the continuous-time mixed-monotonicity (CTMM) reachability method \citep{meyer2021interval} as its default reachability backend, chosen for its low per-call cost which makes it well suited to a refinement loop that re-computes reachability on many sub-cells. The verifier returns one of three verdicts (\texttt{SAFE}, \texttt{FALSIFIED}, or \texttt{UNKNOWN}) and currently supports safe-set inclusion verification on three model classes (pure neural ODE, neural ODE in closed loop with a neural network controller, general neural ODE), with the safe set specified either as an interval or as the half-space intersection induced by a target classification label. The architecture is modular and can be extended to additional reachability analysis methods, model classes, and specification types.

The paper is organized as follows. Section~\ref{sect:related_works} reviews existing tools for neural ODE reachability and verification, and positions TNODEV against them. Section~\ref{sect:prelimin} introduces the necessary preliminaries on neural ODE reachability and the CTMM reachability method. Section~\ref{sect:verif_arch} describes TNODEV verifier architecture, including its falsification module, specification interface, verification and refinement loop, three refinement heuristics, and the parallel scheduler. Section~\ref{sect:experiments} reports the experimental evaluation in three parts: a direct reachability comparison against NNV 2.0 and CORA, a verification evaluation on six safe-set inclusion benchmarks across two model classes, and a classification robustness evaluation on MNIST GNODE classifiers. Section~\ref{sect:conclusion} concludes the paper and discusses future directions. 

\section{Related work}
\label{sect:related_works}

The verification of neural ODE is a recent research direction, and the available choice of tools is much narrower than the current available tools for neural networks. Existing tools differ along three main axes: (i) whether they are designed for neural ODE specifically or adapted to them from a more general setting; (ii) the type of guarantees they provide, i.e., deterministic (sound) or probabilistic; (iii) and whether they use only reachability analysis as the main primitive or a complete verifier pipeline that includes a falsification module and input-set refinement. Table~\ref{tab:tool-comparison} summarizes the main tools and techniques currently available for the reachability analysis and verification of neural ODE.

The first work on neural ODE reachability is the \textit{Stochastic Lagrangian Reachability} (SLR) method of \citet{grunbacher2021slr}, which combines Lipschitz-based bounds on the flow map with a stochastic optimization procedure to produce confidence intervals on the reachable set. This reachability method was subsequently extended to longer time horizons in the \textit{GoTube} tool \citep{gruenbacher2022gotube}. Both methods provide probabilistic rather than deterministic guarantees, and the distinction is important for the user, 
as a deterministic reachability method returns a set that is mathematically proven to contain every trajectory of the system, while a probabilistic reachability method returns a set that contains every trajectory only with a user-specified confidence level, leaving a non-zero residual probability that a trajectory ends outside the reported set. Such stochastic approaches can be more appropriate for runtime monitoring and statistical assurance use cases but not for safety-critical systems, where any non-zero probability of missing an unsafe behavior is unacceptable.

The first deterministic tool dedicated to neural ODE is \textit{NNVODE} \citep{manzanas2022nnvode}, which introduces a \emph{General Neural ODE} (GNODE) class combining discrete neural network layers (fully connected, convolutional) with continuous ODE layers. \textit{NNVODE} is built as an extension of the neural network verification tool \textit{NNV} \citep{tran2020nnv}, it handles the discrete neural network layers with \textit{NNV}'s star set-based reachability analysis \citep{tran2019starset} and the continuous ODE block with the zonotope-based reachability backend of CORA \citep{althoff2015cora}. This work was later integrated into \textit{NNV~2.0} tool \citep{lopez2023nnv}, which is the closest existing tool to TNODEV in scope. However, the main difference from
TNODEV is that \textit{NNV~2.0} performs only one star-set reachability call per query without any input-set refinement, so the precision of its verdict is limited only to a single reachability analysis call. More recently, \textit{ModelVerification.jl} \citep{wei2025modelverification} a Julia-based toolbox building on the earlier \textit{NeuralVerification.jl} \citep{liu2021algorithms}, provides deterministic guarantees through reachability analysis and lists neural ODE among its supported network architectures alongside feed-forward networks, convolutional networks, and ResNet.

Another line of work uses general-purpose reachability tools that were not originally designed for neural ODE but can be adapted to them. \textit{CORA} \citep{althoff2015cora} is a widely-used reachability library for linear, nonlinear, and hybrid dynamical systems based on zonotopes and their generalizations (matrix zonotopes, polynomial zonotopes). Its \texttt{nonlinearSys} backend can be applied to a neural ODE by treating it as a generic nonlinear vector field. Along these lines, \citet{liang2024topological} combine CORA's zonotope-based reachability with a set-boundary analysis that exploits the homeomorphism property of invertible neural networks (including neural ODE) to propagate only the boundary of the input set rather than the entire set. The most recent addition to the literature on neural ODE reachability analysis is \citet{sayed2025mixed}, relying on the continuous-time mixed monotonicity (CTMM) method from \textit{TIRA} \citep{meyer2019tira}, declined in three versions: computing the final reachable set in a single integration step; incrementally with intermediate time steps; and combined with a boundary-based approach. However, similar to CORA, it only contains a reachability step. TNODEV uses the CTMM method as its default reachability backend and wraps it as a full neural ODE verifier. 

A different approach to neural ODE verification proceeds indirectly, by establishing formal relationships between neural networks and neural ODE. \citet{sayed2025bridging} establish a formal error bound between a neural ODE and the ResNet obtained by treating one Euler step of the ODE as a single residual block, so that any specification certified on one of the two models can be transferred to the other. This approach is complementary to the direct reachability analysis tools for neural ODE discussed above, as it allows for re-using any of the available neural network verification tools.

\begin{table}[H]
\caption{Tools and techniques that handle neural ODE reachability and/or verification. ``Designed for nODE'' indicates whether the tool was developed specifically for neural ODE or adapted from a more general setting. The columns ``Use for nODE'' and ``Iterative refinement for nODE'' describe each tool's behavior \emph{when applied to neural ODE} only, as some tools support broader functionality on standard neural networks.}
\label{tab:tool-comparison}
\begin{center}
\resizebox{\textwidth}{!}{%
\begin{tabular}{lccccc}
\multicolumn{1}{c}{\bf Tool/Technique} & \multicolumn{1}{c}{\bf Set Representation} & \multicolumn{1}{c}{\bf Guarantee} &
\multicolumn{1}{c}{\bf Designed} & \multicolumn{1}{c}{\bf Use for nODE} &
\multicolumn{1}{c}{\bf Iterative refinement} \\
\multicolumn{1}{c}{} & \multicolumn{1}{c}{} & \multicolumn{1}{c}{} & 
\multicolumn{1}{c}{\bf for nODE} & \multicolumn{1}{c}{} & 
\multicolumn{1}{c}{\bf for nODE}
\\ \hline \\
SLR \citep{grunbacher2021slr}                        & Lipschitz balls          & Probabilistic & \checkmark & Reachability  & no  \\
GoTube \citep{gruenbacher2022gotube}                 & Lipschitz balls          & Probabilistic & \checkmark & Reachability  & no  \\
NNVODE \citep{manzanas2022nnvode}                    & Star set + zonotopes     & Deterministic & \checkmark & Reachability  & no  \\
NNV~2.0 \citep{lopez2023nnv}                         & Star set + zonotopes     & Deterministic & adapted    & Verification  & no  \\
ModelVerification.jl \citep{wei2025modelverification} & Multiple                 & Deterministic & adapted    & Verification  & no  \\
CORA \citep{althoff2015cora}                         & Zonotopes                & Deterministic & adapted    & Reachability  & no  \\
\citet{liang2024topological} + CORA                  & Zonotopes                & Deterministic & adapted    & Reachability  & no  \\
TIRA + CTMM \citep{sayed2025mixed}                   & Intervals                & Deterministic & \checkmark & Reachability  & no  \\
Formal relationship of \citet{sayed2025bridging}     & Zonotopes                & Deterministic & \checkmark & Verification  & not applicable \\
\textbf{TNODEV (ours)}                               & Intervals                & Deterministic & \checkmark & Verification  & yes \\
\end{tabular}%
}
\end{center}
\end{table}
\vfill

\section{Preliminaries}
\label{sect:prelimin}

\subsection{Neural ODE Reachability}

\begin{definition}[Neural ODE Reachability]
\label{def:nODE_reachability}
Given an initial set $\Xin \subseteq \R^n$ and final time $t_f$ for the neural ODE, the set of reachable outputs is defined as:
\[
    \Reach(\Xin) \;=\; \{\, \Phi(t_f, u) \;\mid\; u \in \Xin \,\},
\]
where $\Phi : \R \times \R^n \to \R^n$ is the solution map of the initial value problem associated with equation (\ref{eq:node}), such that $\Phi(t_f, u)$ is the state reached at time $t_f$ from the initial condition $x(0) = u$.
\end{definition}

Since $\Reach(\Xin)$ cannot be computed exactly, we rely on an over-approximation $\Omega(\Xin)$ such that $\Reach(\Xin) \subseteq \Omega(\Xin)$. 

Definition~\ref{def:nODE_reachability} characterizes the reachable set at the single time instant $t_f$. For the construction of the CTMM decomposition function in Section~\ref{subsec:ctmm}, we need to initially obtain some bounds on the union of all the reachable sets over the entire time interval $[0, t_f]$, which we refer to as the \emph{reachable tube}.

\begin{definition}[Neural ODE Reachable Tube]
\label{def:nODE_reach_tube}
Given an initial set $\Xin \subseteq \R^n$ and a final time $t_f$ for the neural ODE, the reachable tube is defined as:
\[
    \Reach^{\text{tube}}(\Xin) \;=\; \bigcup_{t \in [0, t_f]}
        \{\, \Phi(t, u) \mid\; u \in \Xin \,\}.
\]
\end{definition}

This reachable tube contains every possible reachable state of the system starting from $\Xin$ over the time range $[0, t_f]$, and therefore provides a domain over which the Jacobian of $f$ can be bounded. 

\subsection{Reachability via Continuous-Time Mixed Monotonicity}
\label{subsec:ctmm}

The reachability component at the core of TNODEV relies on CTMM \citep{meyer2021interval,coogan2020mixed}, a property of dynamical systems that allows the original system to be embedded into a higher-dimensional monotone system on which interval bounds can be propagated efficiently.

Given the neural ODE equation (\ref{eq:node}) and an interval initial set $\Xin = [\underline{x}_0, \overline{x}_0] \subseteq \R^n$, the procedure can be carried out in three steps.

\paragraph{Step 1: Reachable tube and Jacobian bounding.} Compute an interval over-approximation $\Omega^{\text{tube}}(\Xin) \supseteq \Reach^{\text{tube}}(\Xin)$ of the reachable tube (Definition~\ref{def:nODE_reach_tube}) using an external reachability tool for neural ODE, e.g. CORA~\citep{althoff2015cora} in our case. From this over-approximation, derive interval bounds $[\underline{J}_x, \overline{J}_x]$ on the Jacobian $J_x(x) = \partial f / \partial x$ for all $x \in \Omega^{\text{tube}}(\Xin)$, using interval arithmetic applied to the Jacobian of the neural network. Lipschitz continuity of $f$ guarantees that these bounds exist, and they are the only condition required for the next step.

\paragraph{Step 2: Decomposition function.} Construct a decomposition function $g : \R^n \times \R^n \to \R^n$ satisfying $g(x, x) = f(x)$ for all $x$, such that $g$ is increasing in its first argument (off-diagonally) and decreasing in its second argument. This is achieved by a sign-stabilization procedure that shifts each off-diagonal entry of the Jacobian into a constant-sign half-plane using a shifting matrix $L_x$ derived from $[\underline{J}_x, \overline{J}_x]$ \citep{sayed2025mixed}.

\paragraph{Step 3: Embedded simulation.} Embed the neural ODE into the $2n$-dimensional monotone system
\begin{equation}
\label{eq:embedded}
\begin{bmatrix} \dot{x} \\ \dot{\hat{x}} \end{bmatrix}
\;=\;
\begin{bmatrix} g(x, \hat{x}) \\ g(\hat{x}, x) \end{bmatrix},
\qquad
\begin{bmatrix} x(0) \\ \hat{x}(0) \end{bmatrix}
\;=\;
\begin{bmatrix} \underline{x}_0 \\ \overline{x}_0 \end{bmatrix},
\end{equation}
and solve it once with a numerical ODE solver from $t = 0$ to
$t = t_f$. The trajectories $x(t)$ and $\hat{x}(t)$ track the
lower and upper bounds of the interval over-approximation
respectively, so that
$\Omega(\Xin) = [x(t_f),\, \hat{x}(t_f)] \supseteq \Reach(\Xin)$.

The key feature of this reachability step for the verifier described in Section~\ref{sect:verif_arch} lies in its low
computational cost, as a single CTMM call requires one Jacobian bounding, one $L_x$ assembly, and one $2n$-dimensional ODE integration.

\section{TNODEV Verifier Architecture}
\label{sect:verif_arch}

TNODEV is a formal verifier for neural ODE systems, with a modular architecture, and it can be easily extended to encompass other reachability analysis methods, model classes, and specification types.

\begin{definition}[Safety specification]
\label{def:spec}
A safety specification is a pair $(\Xin, \mathcal{P})$, where $\Xin \subseteq \R^n$ is an initial input set and $\mathcal{P} : \R^n \to \{\textsf{true}, \textsf{false}\}$ is a property on the state at time $t_f$. The specification is satisfied if $\mathcal{P}\bigl(\Phi(t_f, u)\bigr) = \textsf{true}$ for every $u \in \Xin$. TNODEV currently supports two types of properties, both reducing to safe-set inclusion $x \in \mathcal{X}_s$ for an interval-defined or implicitly-defined safe set $\mathcal{X}_s$:

\begin{itemize}
\item \emph{Interval safe set}: $\mathcal{P}(x) \;\Leftrightarrow\; x \in \mathcal{X}_s$, where $\mathcal{X}_s = [\underline{s}, \overline{s}] \subseteq \R^n$ is an axis-aligned interval given explicitly as part of the specification.
\item \emph{Classification robustness safe set}: $\mathcal{P}(x) \;\Leftrightarrow\; \argmax_j x_j = y^\star$ for a fixed target label $y^\star$, equivalently $x \in \mathcal{X}_s$ with $\mathcal{X}_s = \bigcap_{j \ne y^\star} \{x \in \R^n : x_{y^\star} \geq x_j\}$ the intersection of half-spaces induced by $y^\star$.
\end{itemize}
\end{definition}

Any property $\mathcal{P}$ that admits a sound check on the reachable set over-approximation $\Omega(\mathcal{X})$ can be plugged into the verification loop without modifying the rest of the pipeline. Extending the interface to other property types (e.g., reach-avoid specifications, or polyhedral sets defined by linear half-space constraints rather than axis-aligned intervals) requires only implementing the appropriate check on $\Omega(\mathcal{X})$.  
On the model side, TNODEV supports three classes of neural ODE models: a pure neural ODE, a general neural ODE that combines an ODE block with discrete neural network layers~\citep{manzanas2022nnvode}, and a hybrid neural ODE control system in which the ODE plant is wrapped by a neural network controller. 

Given a specification, the verifier returns one of three verdicts: \texttt{SAFE} (the specification is formally verified), \texttt{FALSIFIED} (a concrete counter-example was found), or \texttt{UNKNOWN} (no decision was reached within the maximum number of refinement iterations $K$ or the wall-clock timeout $T_{\max}$). The soundness of the methods used within TNODEV ensures that the \texttt{SAFE} and \texttt{FALSIFIED} outcomes respectively guarantee the satisfaction and falsification of the specification by the considered neural ODE.
However, without infinite computational power, TNODEV cannot be considered as complete, which is why the algorithm may still return \texttt{UNKNOWN} when no other outcome was reached within the allocated computational limits.

The architecture comprises four components: a falsification module, a specification interface, an iterative verification loop that combines reachability and refinement, and a parallel scheduler. Their interaction is summarized in Algorithm~\ref{alg:tnodev} and illustrated in Figure~\ref{fig:architecture}.

\begin{figure}[t]
\centering
\includegraphics[width=\textwidth]{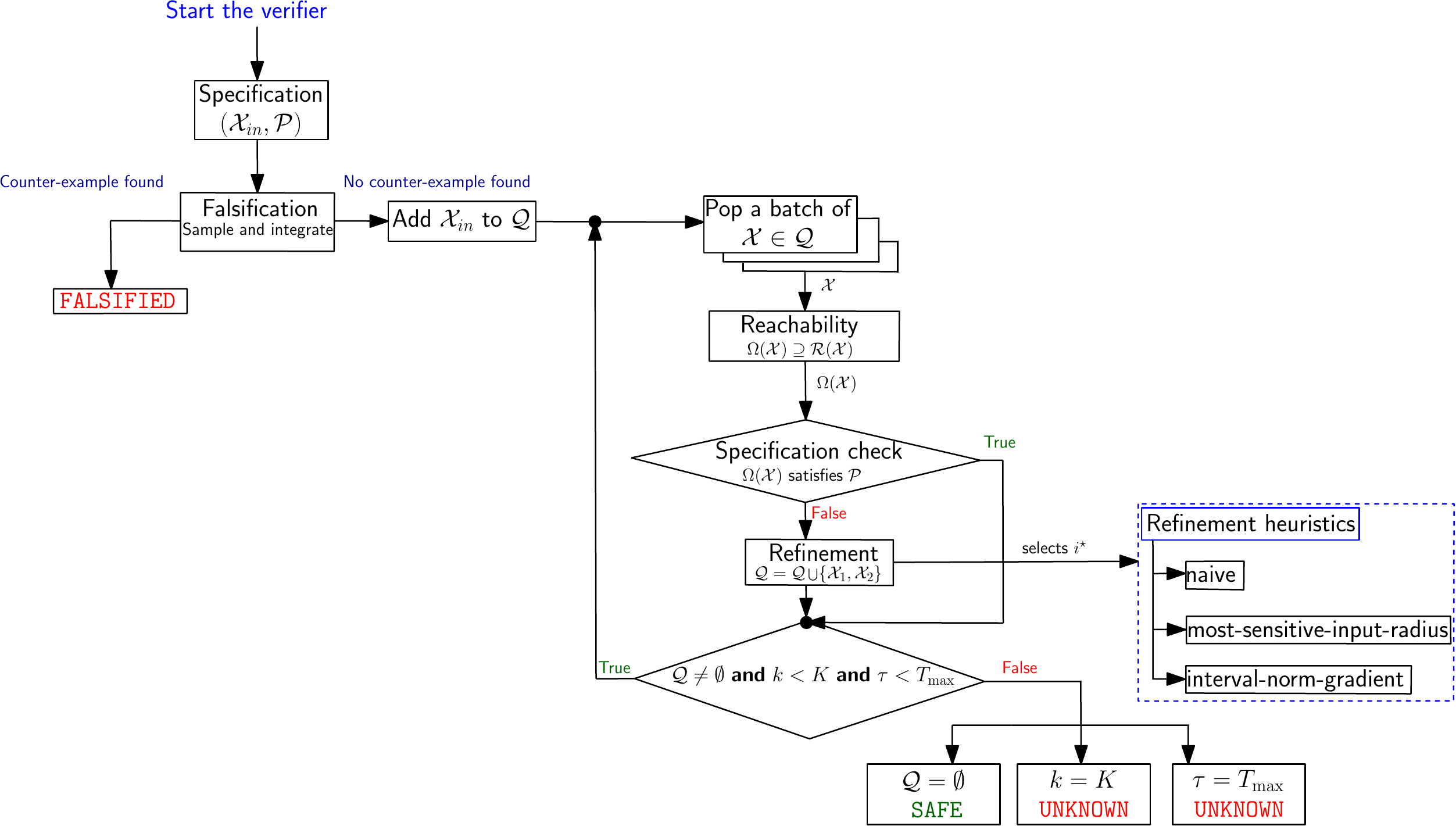}
\caption{TNODEV verifier architecture. The verifier starts with a falsification check on the full initial set $\Xin$. If a counter-example is found, it returns \texttt{FALSIFIED}; otherwise it enters the verification and refinement loop, where the reachability block computes the over-approximation $\Omega(\mathcal{X})$ for each cell, the specification check verifies whether $\Omega(\mathcal{X})$ satisfies the property $\mathcal{P}$, and the refinement block splits inconclusive cells using one among three heuristics. The loop terminates with \texttt{SAFE} when the queue $\mathcal{Q}$ is empty, or \texttt{UNKNOWN} if the refinement iteration limit $K$ or the wall-clock timeout $T_{\max}$ is reached.}
\label{fig:architecture}
\end{figure}

\subsection{Falsification}
\label{subsec:falsification}

Before any reachability call, TNODEV runs a lightweight falsification pass on the entire initial set $\Xin$. A set of samples (corners, centroid, and uniform random points) is drawn from $\Xin$ and each sample is integrated forward to time $t_f$ using the neural ODE dynamics. If any sample trajectory ends in a state where $\mathcal{P}$ is violated, the verifier terminates with a \texttt{FALSIFIED} verdict and returns the violating trajectory as a concrete counter-example, as illustrated in Figure~\ref{fig:verifier_results}(a). Otherwise, the verifier proceeds to the verification loop. Placing falsification before reachability avoids spending refinement iterations on input sets that contain obvious violations, and a \texttt{FALSIFIED} verdict is conclusive since it exhibits an explicit counter-example.

\subsection{Specification Interface}
\label{subsec:spec}

For any cell $\mathcal{X} \subseteq \Xin$ in the refinement queue, the reachability block computes an over-approximation $\Omega(\mathcal{X}) \supseteq \Reach(\mathcal{X})$. The specification interface checks whether $\Omega(\mathcal{X})$ satisfies the property $\mathcal{P}$, with the implementation depending on the property type.

For an interval safe-set inclusion $\mathcal{X}_s = [\underline{s}, \overline{s}]$, the over-approximation is the interval $\Omega(\mathcal{X}) = [\underline{x}, \overline{x}]$ returned by the reachability block, and the check $\Omega(\mathcal{X}) \subseteq \mathcal{X}_s \;\Longleftrightarrow\; \underline{x} \ge \underline{s} \;\;\text{and}\;\; \overline{x} \le \overline{s}$ reduces to component-wise interval comparisons.

For classification robustness, the property is encoded as a set of half-spaces $H_j = \{x \in \R^n : x_{y^\star} - x_j \le 0\}$ for each $j \ne y^\star$ (each $H_j$ is the region where competing class $j$ ties or beats the target $y^\star$), and the cell is verified if $\Omega(\mathcal{X}) \cap H_j = \emptyset$ for all $j$.

If the check passes, the cell is verified as \texttt{SAFE}. Otherwise, the verdict on this cell is inconclusive, as the violation may correspond to either an unsafe trajectory, or a loose over-approximation. The cell is then forwarded to the refinement step.

\subsection{Verification \& Refinement Loop}
\label{subsec:verif_loop}

If no counter-example is found during the falsification phase, the verifier enters the verification and refinement loop. The loop maintains a queue of input cells $\mathcal{Q}$ initialized to $\{\Xin\}$ and iterates until all cells have been verified, or the refinement iteration limit $K$ is exhausted, or the wall-clock timeout $T_{\max}$ is reached. At each iteration, a batch of cells $\mathcal{B}$ is popped from $\mathcal{Q}$. The batch size $b$ is a fixed parameter, chosen to match the number of available parallel workers so that each iteration dispatches a full round of independent cells to the parallelization pool (Section~\ref{subsec:parallel}). When fewer than $b$ cells remain, the whole queue forms the batch, i.e., $|\mathcal{B}| = \min(|\mathcal{Q}|, b)$. Each cell $\mathcal{X}$ in the batch is processed by a reachability call to obtain $\Omega(\mathcal{X})$, followed by the specification check of Section~\ref{subsec:spec}. Cells that pass the check are marked \texttt{SAFE}, and cells that do not pass are divided into two sub-cells $\mathcal{X}_1, \mathcal{X}_2$ by the refinement step and added to $\mathcal{Q}$. The loop terminates with a \texttt{SAFE} verdict when $\mathcal{Q}$ is empty (i.e., all cells are verified), or with an \texttt{UNKNOWN} verdict if $K$ or $T_{\max}$ are reached while some cells are still in the queue.

\begin{algorithm}[t]
\caption{TNODEV verification loop}
\label{alg:tnodev}
\begin{algorithmic}[1]
\Statex \textbf{Input:} neural ODE $f$, specification
$(\Xin, \mathcal{P})$, final time $t_f$, maximum number of refinement iterations $K$, wall-clock timeout $T_{\max}$, batch size $b$
\Statex \Comment{\textbf{Phase 1: Falsification}}
\State Sample trajectories from $\Xin$ (corners, centroid, random)
\State Integrate each sample forward to $t_f$
\If{any sample violates $\mathcal{P}$}
    \State \Return \texttt{FALSIFIED}, violating trajectory
\EndIf
\Statex \Comment{\textbf{Phase 2: Verification and Refinement}}
\State $\mathcal{Q} \gets \{\Xin\}$ \Comment{queue of input cells}
\State $k \gets 0$ and start timer $\tau$
\While{$\mathcal{Q} \neq \emptyset$ \textbf{and} $k < K$ \textbf{and} $\tau < T_{\max}$}
    \State pop a batch $\mathcal{B} \subseteq \mathcal{Q}$ with $|\mathcal{B}| = \min(|\mathcal{Q}|, b)$
    \ForAll{$\mathcal{X} \in \mathcal{B}$ \textbf{in parallel}}
        \State $\Omega(\mathcal{X}) \gets
            \textsc{Reach}(f, \mathcal{X}, t_f)$
        \If{$\Omega(\mathcal{X})$ satisfies $\mathcal{P}$}
            \State mark $\mathcal{X}$ as \texttt{SAFE}
        \Else
            \State $\mathcal{X}_1, \mathcal{X}_2 \gets
                \textsc{Refine}(\mathcal{X})$
            \State push $\mathcal{X}_1, \mathcal{X}_2$
                onto $\mathcal{Q}$
        \EndIf
    \EndFor
    \State $k \gets k + 1$
\EndWhile
\If{$\mathcal{Q} = \emptyset$}
    \State \Return \texttt{SAFE}
\Else
    \State \Return \texttt{UNKNOWN}
\EndIf
\end{algorithmic}
\end{algorithm}

\paragraph{Refinement.} When a cell $\mathcal{X}$ with center $x$ and half width $r$ cannot be verified, the refinement block divides it along a single input dimension $i^\star$ chosen by a heuristic, producing two sub-cells of equal half width along $i^\star$. We implement and compare three heuristics, illustrated on the Spiral~2D benchmark in Figure~\ref{fig:verifier_results}(b)--(d).

\smallskip
\noindent\emph{Naive.} Split the dimension with the largest interval radius: $i^\star = \arg\max_i r_i$. This is the standard baseline used by most verification tools.

\smallskip
\noindent\emph{Most-sensitive-input-radius (MSIR).} For each input dimension $i$, compute the product of the cell's radius along $i$ and a worst case Jacobian magnitude in column $i$:
\begin{equation*}
\label{eq:msir}
s_i \;=\; r_i \cdot \max_{k}\,
        \max\!\bigl(|\underline{J}_{ki}|,\; |\overline{J}_{ki}|\bigr),
\end{equation*}
where $[\underline{J}, \overline{J}]$ are interval Jacobian bounds of $f$ over the cell. The split dimension is $i^\star = \arg\max_i s_i$. The Jacobian bounds are obtained through interval propagation on the symbolic Jacobian of the network, so MSIR requires no additional reachability calls.

\smallskip
\noindent\emph{Interval-norm-gradient (ING).} Estimate the partial derivatives of the over-approximation width $w(\mathcal{X}) = \sum_k (\overline{x}_k - \underline{x}_k)$ with respect to the cell's center by finite differences. For each input dimension $j$, perturb the center by a small offset $\delta$ along $j$, keep the radius fixed, re-run the reachability call to obtain $w(\mathcal{X}_{j,\delta})$, and let
$g_j \approx \partial w / \partial x_j$ denote the resulting finite-difference estimate:
\begin{equation*}
\label{eq:ing}
g_j \;\approx\;
\frac{w(\mathcal{X}_{j,\delta}) - w(\mathcal{X})}{\delta}.
\end{equation*}

The split dimension is $i^\star = \arg\max_j |g_j|$. ING requires $n$ additional reachability calls per refinement decision, i.e., one per input dimension.

\noindent The three heuristics span a cost spectrum: naive uses no extra computation, MSIR adds only Jacobian-bounding cost (negligible relatively to a reachability call), and ING adds $n$ reachability calls per refinement decision. In practice, MSIR offers the best overall trade-off, as it reduces the number of refinement iterations through informed splitting decisions without incurring the per-decision overhead of ING (see Table~\ref{tab:results} and the per-benchmark analysis in Appendix~\ref{app:refinement_cost} where MSIR achieves the lowest verification time on the FPA benchmark and the largest iteration reduction on the higher-dimensional Cartpole benchmark). This positions MSIR as the optimal default heuristic for TNODEV.

\begin{figure}[t]
\centering
\begin{tabular}{@{}c@{\hspace{0.5em}}c@{}}
\includegraphics[width=0.48\textwidth]{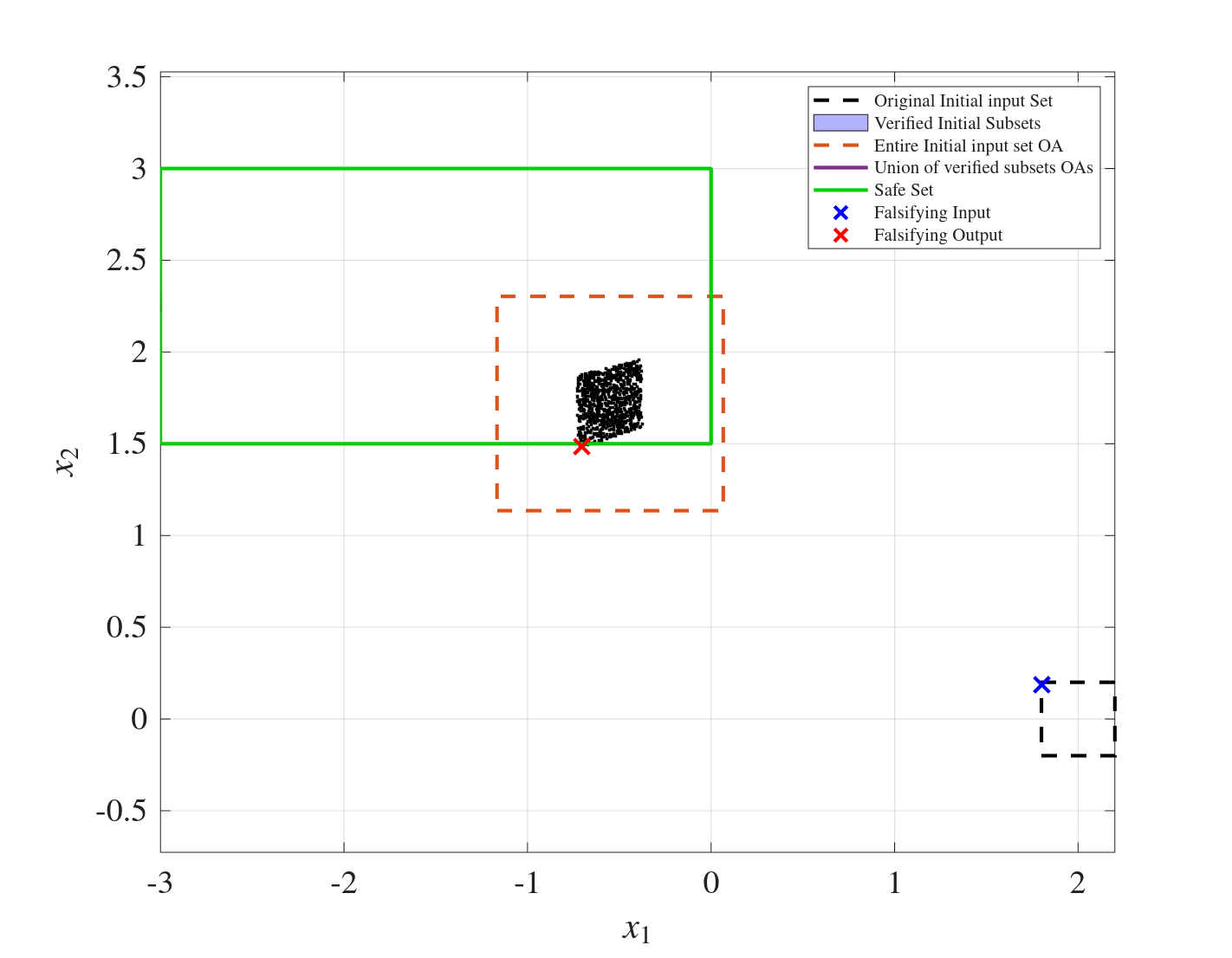} &
\includegraphics[width=0.48\textwidth]{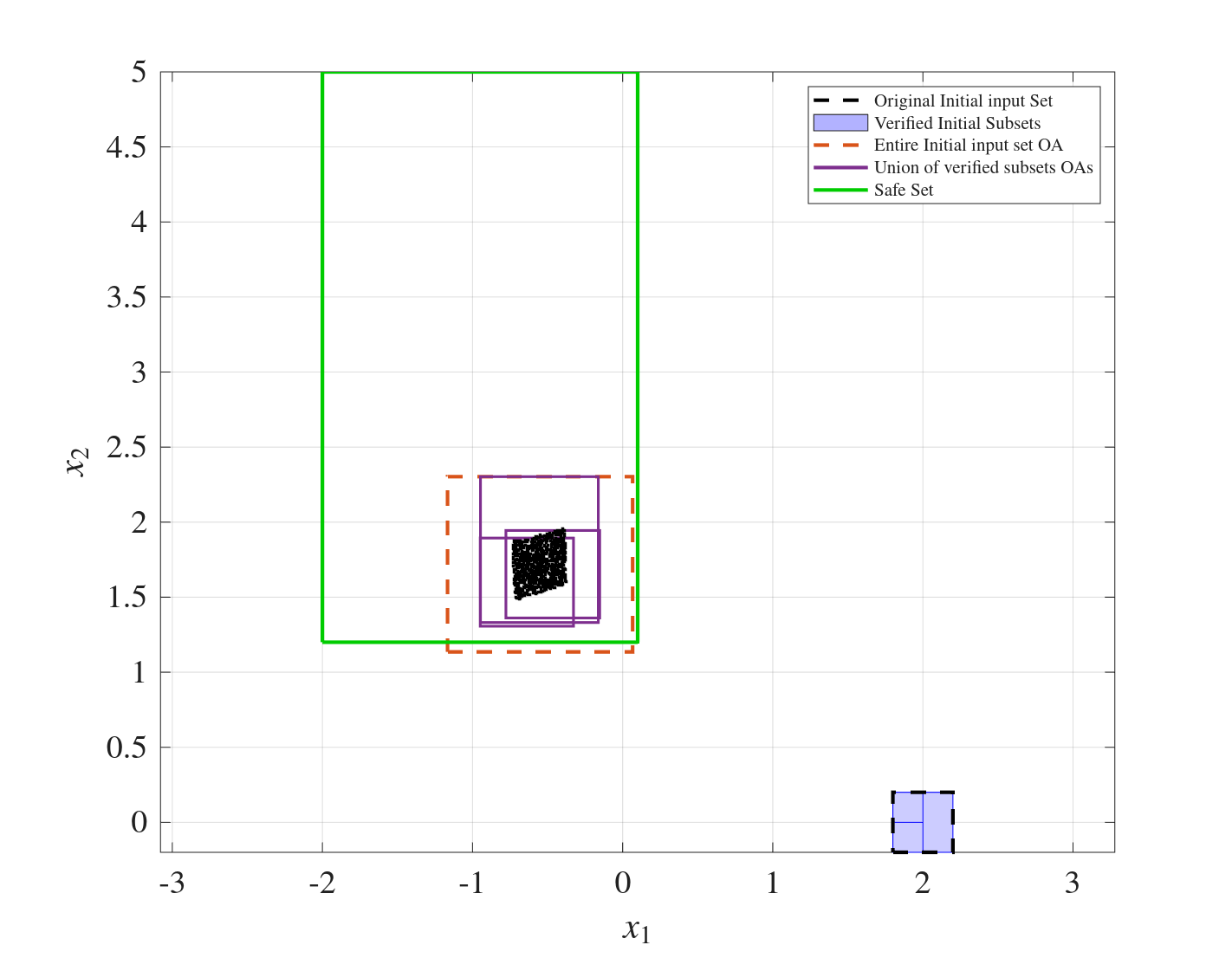} \\
(a) Falsification & (b) Naive refinement \\[0.8em]
\includegraphics[width=0.48\textwidth]{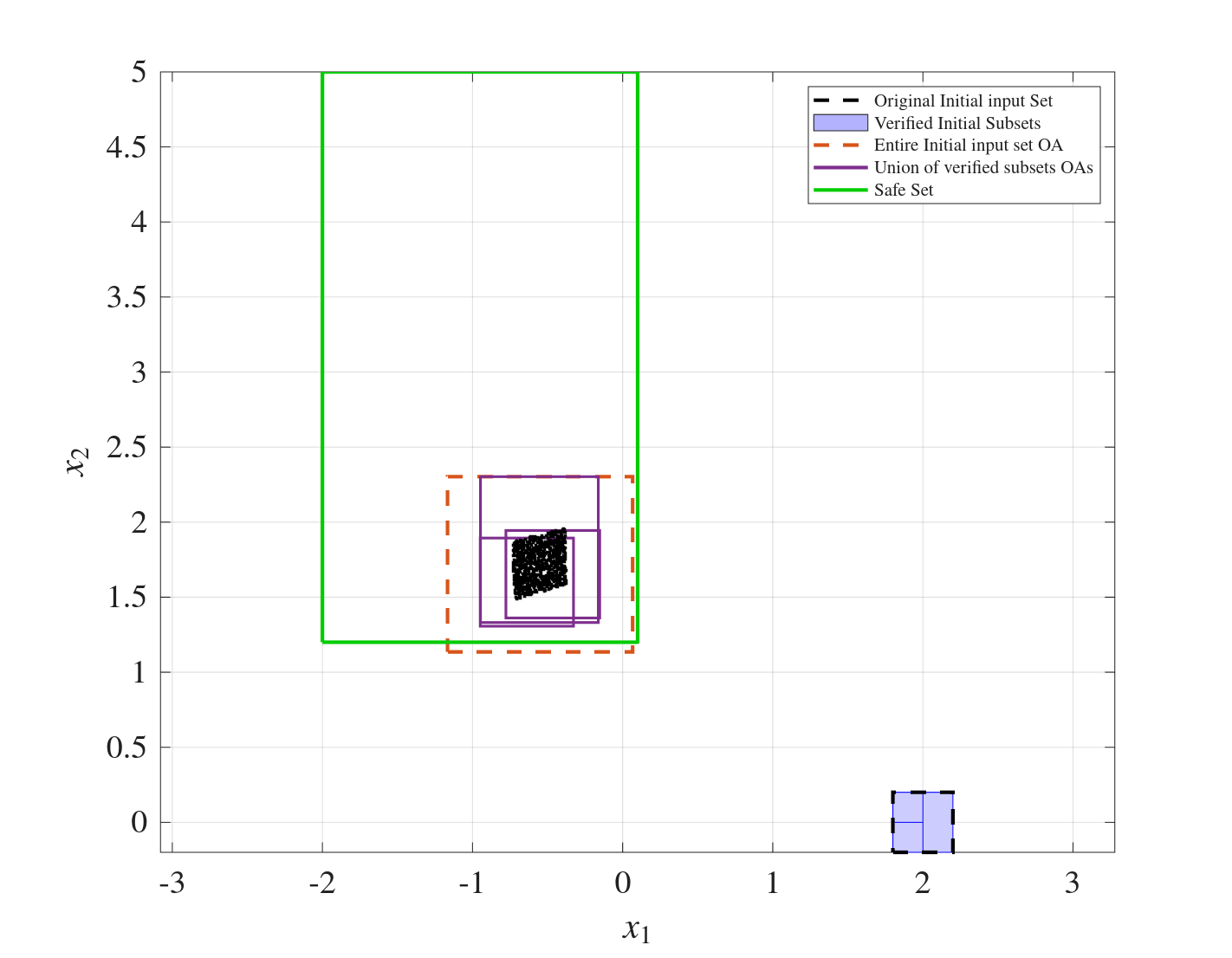} &
\includegraphics[width=0.48\textwidth]{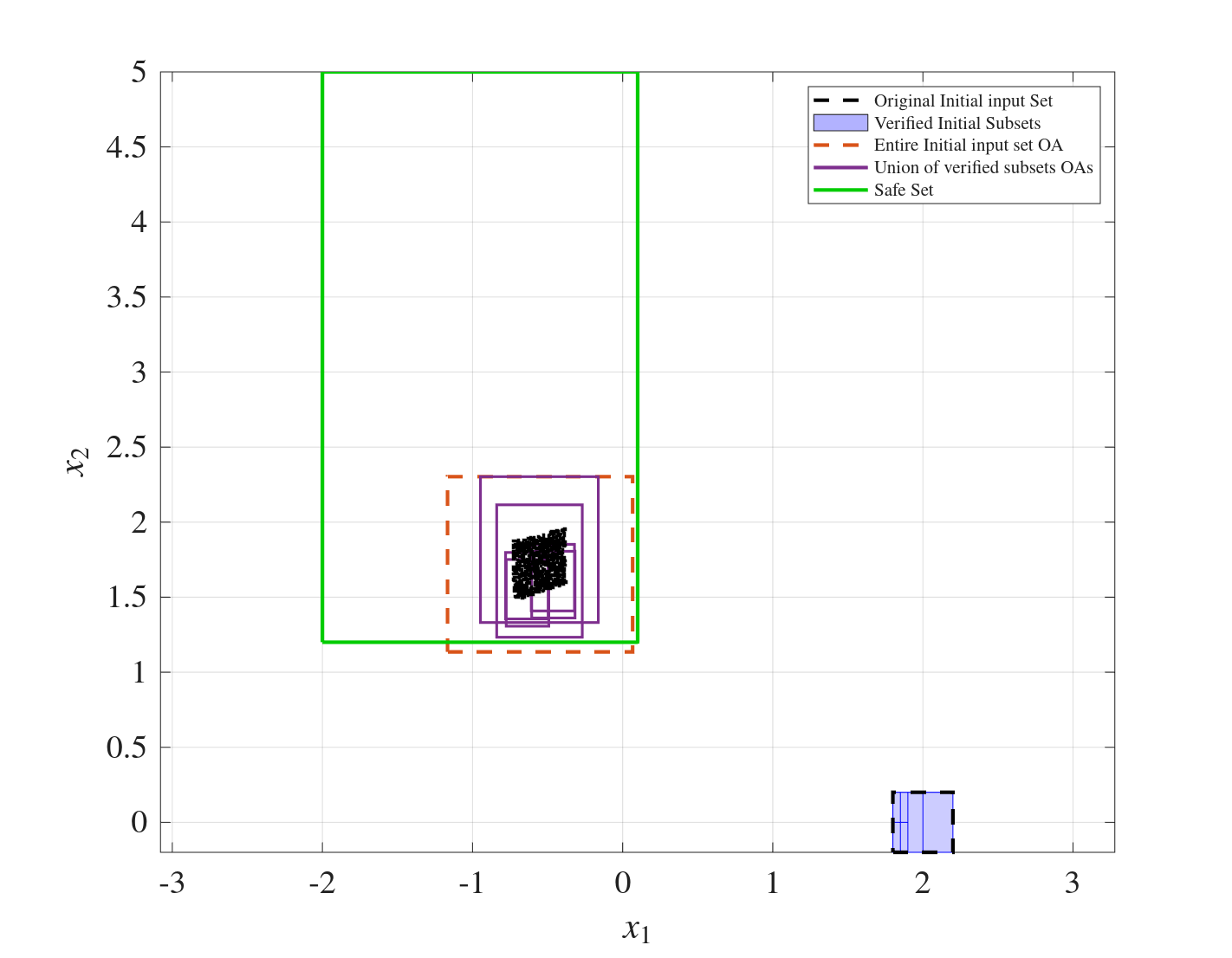} \\
(c) MSIR refinement & (d) ING refinement
\end{tabular}
\caption{TNODEV verification results on the Spiral~2D benchmark. In all sub figures, the dashed black rectangle is the original initial set $\Xin$, the dashed \textcolor{red}{red} rectangle is the over-approximation $\Omega(\Xin)$ from a single reachability call on the unsplit initial set, the \textcolor{Green}{green} rectangle is the safe set $\mathcal{X}_s$, the \textcolor{violet}{violet} rectangles represent the verified subsets, and the black dots are sampled successor trajectories.}
\label{fig:verifier_results}
\end{figure}

\subsection{Parallelization}
\label{subsec:parallel}

Because each cell in the refinement queue $\mathcal{Q}$ can be processed independently, the inner loop of Algorithm~\ref{alg:tnodev} (lines~10--18) is run in parallel. At each iteration, TNODEV pops a batch of $|\mathcal{B}| = \min(|\mathcal{Q}|, b)$ cells and distributes them across CPU workers, so that each worker runs the reachability and specification check on its assigned cell and either marks it \texttt{SAFE} or produces the two refined sub-cells. The refined sub-cells are pushed back onto $\mathcal{Q}$ at the end of the iteration, and the next batch is popped only once the whole batch completes. Workers within a batch operate on disjoint cells and need no synchronization among themselves.

\section{Experiments and Results}
\label{sect:experiments}

\subsection{Reachability Analysis Comparison}
\label{subsec:reachability}

o the best of our knowledge, TNODEV is the first formal verifier for neural ODE to combine reachability, falsification, and iterative refinement in a single pipeline. As a consequence, no other existing tool exposes an equivalent end-to-end workflow to compare against, and the range of head-to-head verification comparisons is consequently limited. To compensate for that, we start the experimental evaluation with a direct comparison of TNODEV core reachability component against the two closest neural ODE reachability tools in the literature: NNV 2.0 \citep{lopez2023nnv} and CORA \citep{althoff2015cora}. The comparison is carried out on the Spiral nonlinear and FPA benchmarks (defined in Appendix~\ref{apd:System_description}), where all three tools compute a single reachability call. We measure the wall-clock time of one reachability call, and the size of the resulting over-approximation reported as a single dimension-comparable metric, the geometric mean width $\mu(\Omega) = \sqrt[n]{\mathrm{Vol}(\Omega)}$, defined and detailed in Appendix \ref{apd:reachability_comparison}.

The results highlight a consistent qualitative trade-off across the three tools. On the Spiral nonlinear benchmark, TNODEV is $14\times$ faster than NNV 2.0 and $83\times$ faster than CORA, as reported in Table \ref{tab:reach_comp_spiral_non}. On FPA, TNODEV is approximately $7\times$ faster than NNV 2.0 and $39\times$ faster than CORA, as reported in Table \ref{tab:reach_comp_fpa}. However, this gain in terms of computation time comes at the cost of the tightness of the over-approximation. Namely, on Spiral nonlinear, NNV 2.0 Star set and CORA  zonotope produce over-approximations respectively $4\times$ and $4.3\times$ tighter than TNODEV interval over-approximation. On FPA, the results are more mixed, as TNODEV interval over-approximation is approximately $1.3\times$ tighter than NNV 2.0 Star set, while CORA's zonotope remains the tightest at $1.4\times$ tighter than TNODEV.

This speed-tightness trade-off motivates the verifier architecture of Section \ref{sect:verif_arch}, as instead of relying on a single reachability call to be tight, TNODEV wraps a fast reachability backend in the verification and refinement loop that splits the input set and re-compute the reachability on each sub-cell until the over-approximation becomes tight enough for the property to be certified, or the refinement iteration limit is reached. 
The remainder of the section evaluates this loop in two regimes. Section~\ref{subsec:safe_set_benchmarks} exercises the loop on six safe-set inclusion benchmarks across pure and hybrid neural ODE model classes. Section~\ref{subsec:exp_classification} then evaluates the scalability of TNODEV on the high-dimensional regime of GNODE classifiers, and provides a direct verification comparison against NNV 2.0, which can run on these benchmarks but offers only a single reachability call without iterative refinement.

\subsection{Interval safe set inclusion benchmarks} 
\label{subsec:safe_set_benchmarks}

We evaluate TNODEV on four pure neural ODE benchmarks (Spiral~2D linear and nonlinear, FPA~5D, Cartpole~12D) and two hybrid neural ODE control system benchmarks (ACC~8D linear and nonlinear plant variants) as described in Appendix~\ref{apd:System_description}. Each benchmark is run with the three refinement heuristics introduced in Section~\ref{subsec:verif_loop}. We set $K=5000$ as the maximum number of refinement iterations and $T_{\max}=120$ minutes as the wall-clock timeout, for all benchmarks. Table~\ref{tab:results} reports the results for each benchmark. Out of the eighteen benchmark-heuristic combinations, fourteen terminate with a \texttt{SAFE} verdict. The other four return \texttt{UNKNOWN} due to the ING heuristic either exhausting the refinement limit on Spiral linear and Cartpole, or reaching the wall-clock timeout on both ACC variants. The fourteen \texttt{SAFE} verdicts demonstrate that TNODEV verifies the specified safety properties across input dimensions ranging from $2$ to $12$. The four \texttt{UNKNOWN} verdicts are further discussed in Appendix~\ref{app:refinement_cost}. All of the specifications for the benchmarks are given in Appendices \ref{apd:Linear_Spiral_spec}--\ref{apd:ACC_spec}

\begin{table}[t]
\caption{TNODEV verification results across all benchmarks. Each benchmark is run with three refinement heuristics.
Iterations and verified subsets refer to the verification and refinement loop (Algorithm~\ref{alg:tnodev}). Time is the total verification time.}
\label{tab:results}
\begin{center}
\resizebox{\textwidth}{!}{%
\begin{tabular}{l l l c c c c}
\multicolumn{1}{c}{\bf Benchmark} & \multicolumn{1}{c}{\bf Dim.} & \multicolumn{1}{c}{\bf Heuristic} & \multicolumn{1}{c}{\bf Time} & \multicolumn{1}{c}{\bf Iter.} & \multicolumn{1}{c}{\bf Verdict} & \multicolumn{1}{c}{\bf Verified Subsets} \\
\hline \\
Spiral linear     & 2 & Naive & 6.79 s    & 31  & SAFE & 16 \\
                  &   & MSIR  & 6.50 s    & 29  & SAFE & 15 \\
                  &   & ING   & 13.16 min   & 5000 & UNKNOWN  & 242 \\[2pt]
Spiral nonlinear  & 2 & Naive & 15.60 s   & 5   & SAFE & 3 \\
                  &   & MSIR  & 10.78 s   & 3   & SAFE & 2 \\
                  &   & ING   & 46.96 s   & 11  & SAFE & 6 \\[2pt]
FPA               & 5 & Naive & 53.78 s   & 17  & SAFE & 9 \\
                  &   & MSIR  & 34.50 s   & 9   & SAFE & 5 \\
                  &   & ING   & 53.78 s   & 3   & SAFE & 2 \\[2pt]
Cartpole          & 12 & Naive & 2.73 min & 2037   & SAFE & 1019 \\
                  &    & MSIR  & 56.43 s & 63   & SAFE & 32 \\
                  &    & ING   & 54.03~min & 5000   & UNKNOWN & 0 \\[2pt]
ACC linear        & 8 & Naive & 4.78 min   & 15   & SAFE & 8 \\
                  &   & MSIR  & 2.72 min   & 7   & SAFE & 4 \\
                  &   & ING   & 120 min   & 1087   & UNKNOWN & 0 \\[2pt]
ACC nonlinear     & 8 & Naive & 20.03 min  & 41   & SAFE & 21 \\
                  &   & MSIR  & 29.61 min  & 37   & SAFE & 19 \\
                  &   & ING   & 120~min  & 607   & UNKNOWN & 0 \\
\end{tabular}%
}
\end{center}
\end{table}

\paragraph{Pure neural ODE benchmarks.} On the small scale benchmarks (Spiral 2D and FPA 5D), the verification and refinement loop runs for multiple refinement iterations, ranging from $3$ to $5000$, depending on the heuristic and the property $\mathcal{P}$ being verified. The number of verified subsets and the verification time vary between heuristics as illustrated in Figures \ref{fig:Linear_Spiral_results}--\ref{fig:FPA_ING_results} of Appendix \ref{apd:Detailed_results}, and we discuss this trade-off in Appendix~\ref{app:refinement_cost}. On the Spiral linear variant, the ING heuristic reaches the refinement limit without converging, producing a total of $242$ verified subsets but ultimately returning an \texttt{UNKNOWN} verdict, due to repeated subdivision without sufficient progress. On the 12D Cartpole benchmark, the verification and refinement loop capabilities are illustrated for the naive heuristic resulting in $2037$ refinement iterations and $1019$ verified subsets, and the MSIR heuristic reduces this amount to $63$ refinement iterations and $32$ verified subsets. The ING heuristic also fails to converge on this benchmark, reaching the refinement limit without verifying a single subset and returns \texttt{UNKNOWN}, reflecting the compounding cost of its $n$ extra reachability calls per refinement decision on higher dimensional input space.

\paragraph{Hybrid neural ODE control system benchmark.} On the ACC 8D benchmark, both the linear and nonlinear plant variants are verified \texttt{SAFE} using the naive and MSIR heuristics, exercising the refinement loop in the hybrid closed-loop setting where a neural network controller wraps the neural ODE plant. On the linear variant, the verification achieved based on the MSIR heuristic takes $2.72$ min over $7$ iterations against $4.78$ min over $15$ iterations for the naive heuristic. On the nonlinear variant, the verification based on the naive heuristic takes $20.03$ min over $41$ iterations while it takes $29.61$ min over $37$ iterations for the MSIR heuristic. It is worth noting that the higher cost of the nonlinear variant over the linear one reflects the higher per-step reachability cost of the $\tanh$-activated plant. The ING heuristic fails to converge on both variants, reaching the wall-clock timeout without verifying any subset and returning \texttt{UNKNOWN}, consistent with the compounding cost of its $n$ extra reachability calls per refinement decision observed on the other higher-dimensional benchmarks (see the per-benchmark analysis in Appendix~\ref{app:refinement_cost}).

\subsection{Classification neural ODE benchmark}
\label{subsec:exp_classification}

We additionally evaluate TNODEV on a classification robustness benchmark built around a general neural ODE (GNODE)  classifier \citep{manzanas2022nnvode} for the MNIST handwritten-digit data set. Full architectural details, the hybrid reachability pipeline, and the bound-extraction modes used at the ODE-block boundary are given in Appendix~\ref{apd:gnode}. The verification problem on this benchmark is to certify $L_\infty$ adversarial robustness around individual test images, which we formalize as a specific instance of Definition~\ref{def:spec}.

\begin{definition}[$L_\infty$ adversarial robustness]
\label{def:robustness}
Let $f$ be a neural ODE classifier whose final state at time $t_f$ is the logit vector $\Phi(t_f, u) \in \mathbb{R}^n$ over $n$ classes, and let $(x_0, y^\star)$ be a clean input label pair for which $f$ is correct, i.e., $\argmax_j \Phi_j(t_f, x_0) = y^\star$. We say $f$ is robust at $(x_0, y^\star)$ if the safety specification $(\Xin, \mathcal{P})$ is satisfied, where
\[
\Xin = B_\infty(x_0, \epsilon) = \{ u \in \mathbb{R}^n : \|u - x_0\|_\infty \le \epsilon \}
\]
is the $L_\infty$ ball of radius $\epsilon$ around $x_0$, and $\mathcal{P}$ is the classification robustness property with target label $y^\star$:
\[
\mathcal{P}(x) \;\Leftrightarrow\; \argmax_j x_j = y^\star.
\]
\end{definition}

A classifier verified $\epsilon$-robust at $(x_0, y^\star)$ is guaranteed to predict $y^\star$ on all adversarially perturbed inputs $u \in B_\infty(x_0, \epsilon)$, formally certifying it against any $L_\infty$-bounded attack of magnitude up to $\epsilon$. 

We compare TNODEV with NNV 2.0 on the two convolutional GNODE classifiers CNODE\textsubscript{S} and CNODE\textsubscript{M} of \citet{manzanas2022nnvode}. TNODEV current version targets the verification of neural ODE, but does not yet provide reachability for the neural network layers that surround the ODE block in a GNODE architecture. We therefore construct a hybrid pipeline in Appendix \ref{apd:classification_nODE_Hybrid_reachability_pipeline} in which NNV 2.0 handles the pre-ODE and post-ODE layers, while TNODEV handles the linear ODE block via a closed-form matrix-exponential propagator using interval sets.
We state its construction formally and prove its soundness in Appendix~\ref{apd:classification_nODE_Hybrid_reachability_pipeline}, and discuss the empirical justification for the design choice there as well. We report TNODEV under two bound-extraction modes (LP-tight and fast) to convert into an interval the sets coming into the ODE block, and this is discussed in more detail in Appendix~\ref{apd:classification_nODE_star_to_interval}. We evaluate on the first 50 MNIST test images at $\epsilon = 0.5/255$, matching the sample size used in \citet{manzanas2022nnvode}, and report per-image and total computational times for both tools. Neither of the two tools uses iterative refinement in this comparison, as NNV 2.0 does not currently implement input-set refinement, and it manages to obtain its verification verdicts without it. Therefore, to keep the comparison meaningful, we use TNODEV here as well without refinements, and we leave adding the refinement for this benchmark, applied to the global GNODE input set rather than to the ODE block input for future work.

Table \ref{tab:mnist-cnode} shows that NNV 2.0 verifies nearly all images on both networks (49/50 on CNODE\textsubscript{S}, and 50/50 on CNODE\textsubscript{M}). TNODEV verifies 29/50 (58\%) on CNODE\textsubscript{S} and 30/50 (60\%) on CNODE\textsubscript{M}, with the LP-tight and fast bound-extraction modes returning identical verdict counts on this benchmark, indicating that the interval over-approximation through the linear ODE is the binding constraint, rather than the predicate-constraint information discarded at the ODE-block boundary (Appendix~\ref{apd:classification_nODE_star_to_interval}).

\begin{table}[tbh]
\centering
\caption{Robustness verification of MNIST GNODE classifiers under a full-image $L_\infty$ perturbation of $\epsilon = 0.5/255$. $n$ denotes the dimension of the linear ODE state in each network (equal to the flattened pre-ODE feature dimension, refer to Table~\ref{tab:mnist-arch}).}
\label{tab:mnist-cnode}
\small
\begin{tabular}{lllrr}
\toprule
\textbf{Network} & \textbf{Method} & \textbf{Robust/50} & \textbf{Time/img.\ (s)} & \textbf{Total (s)} \\
\midrule
\multirow{3}{*}{CNODE\textsubscript{S} ($n=676$)}
 & NNV~2.0              & 49 & 10.83 & 541.6  \\
 & TNODEV (LP-tight)    & 29 & 22.97 & 1148.6 \\
 & TNODEV (fast)        & 29 &  2.20 & 109.9  \\
\midrule
\multirow{3}{*}{CNODE\textsubscript{M} ($n=1690$)}
 & NNV~2.0              & 50 & 166.4  & 8321.9 \\
 & TNODEV (LP-tight)    & 30 &  35.87 & 1793.5 \\
 & TNODEV (fast)        & 30 &   0.83 & 41.4   \\
\bottomrule
\end{tabular}
\end{table}

\paragraph{Where NNV 2.0 wins.}
The verification gap originates entirely from the linear ODE block, or more precisely from the fact that the true reachable set of the linear ODE block from the input interval is in general a polytope rather than an axis-aligned box. NNV 2.0 represents this reachable set exactly using a predicate-constrained Star set, while TNODEV represents it by the tightest axis-aligned box that contains it, which is a sound enclosure but discards the polytopal structure whenever the reachable set is not itself axis-aligned. The empirical widening observed through the linear ODE is roughly $7\times$ in aggregate width on CNODE\textsubscript{S} and $8\times$ on CNODE\textsubscript{M} at $\epsilon = 0.5/255$. The MNIST benchmark consists of a single linear ODE block, a setting in which Star set propagation is exact and interval based propagation is not, which is why TNODEV is at a structural disadvantage here. The verification rate shortfall is therefore the tightness side of the same speed-tightness trade-off observed in the reachability comparison of Appendix \ref{apd:reachability_comparison}, as TNODEV is fast but with wider over-approximations, and on a benchmark dominated by a single exactly-representable linear block, that wideness translates directly into fewer verified images (Appendix~\ref{apd:classification_nODE_verification_gap} expands on this comparison). The interval over-approximation through the linear ODE also imposes an empirical limitation on the perturbation magnitudes for which TNODEV produces useful verdicts on this benchmark, Appendix \ref{apd:classification_nODE_applicability} discusses this applicability scope in more detail.

\paragraph{Where TNODEV wins.}
TNODEV is between $5\times$ and $200\times$ faster than NNV 2.0 on a per-image basis, with the gap widening sharply as the ODE state dimension increases. On CNODE\textsubscript{M} the closed-form interval propagation requires only $0.83$ s per image against NNV 2.0's $166.4$ s, which corresponds to a $200\times$ speed up. On the smaller CNODE\textsubscript{S}, NNV 2.0's per-image cost drops to $10.83$ s, slightly faster than TNODEV LP-tight mode at $22.97$ s because the LP-based bound extraction at the ODE-input boundary becomes the dominant cost on small networks, but the fast mode bypasses this bottleneck and recovers a $5\times$ speedup at $2.20$ s. 

\section{Conclusion}
\label{sect:conclusion}

We present TNODEV, a formal verifier for neural ODEs. To the best of our knowledge, TNODEV is the first verifier for neural ODE to integrate refinement-based precision recovery into the workflow, addressing a structural gap in the current neural ODE verification literature where existing tools provide only a single reachability call without any iterative refinement.

The experimental results in Section~\ref{sect:experiments} feature both the strengths and the limitations of TNODEV current version. On the reachability comparison against NNV 2.0 and CORA, TNODEV is one to two orders of magnitude faster while producing a bit looser over-approximations, illustrating the speed and tightness trade-off intrinsic to interval-based propagation. On the safe-set inclusion benchmarks across pure neural ODE and hybrid neural ODE control system model classes, TNODEV verifies benchmarks ranging from $2$ to $12$ input dimensions, with the MSIR heuristic emerging as the best general-purpose default among the three studied heuristics. On the high dimensional MNIST classification benchmark, the verification rate of TNODEV is lower than NNV 2.0 due to the structural disadvantage of interval propagation on linear ODE blocks, but the per-image verification time is between $5\times$ and $200\times$ lower.

Some of the limitations of the current TNODEV version identified in the previous section suggest possible directions for future improvements of the toolbox. 
Firstly, for neural ODE described by a linear ODE, it would be interesting to explore the inclusion of alternative reachability methods relying on richer set representations (such as zonotopes or polynomial zonotopes) to more precisely approximate the reachable set than what the current interval approach can achieve.
Next, we want to design more robust gradient-style refinement heuristics that retain ING's positive splitting behavior while reducing its higher computational cost.
Finally, we would like to include in TNODEV an implementation of reachability methods of discrete neural network layers, to avoid depending on external tools in the context of GNODE systems.




\subsubsection*{Acknowledgments}

This project has received funding from the European Union’s Horizon 2020 research and innovation programme under the Marie Sklodowska-Curie COFUND grant agreement no. 101034248.

\bibliography{main}
\bibliographystyle{tmlr}

\clearpage
\appendix
\newpage

{\LARGE\bf\sffamily \textbf{Appendix}} 

\textbf{Experimental Settings:} All the experiments\footnote{Code available in the following repository: \url{https://doi.org/10.5281/zenodo.20546365}} herein are run on MATLAB R2026a with the Continuous Reachability Analyzer (CORA) version 2025.1.1, the Toolbox for Interval Reachability Analysis (TIRA) version 2, and the Neural Network Verification Software Tool (NNV 2.0) on an AMD Ryzen™ 9 5950X CPU (16 cores, 32 threads) with 96 GB of RAM.

\section{System description}\label{apd:System_description}

This appendix describes the neural ODE benchmarks used in the experiments, organized by the three model classes introduced in Section~\ref{sect:verif_arch}. Spiral and FPA are pure neural ODE in which the entire right-hand side $f$ of \eqref{eq:node} is parameterized by a neural network. Cartpole is also presented as a single ODE, but its right-hand side combines a neural ODE controller with cartpole physics equations rather than being entirely a neural ODE. ACC is a hybrid neural ODE control system in which a neural ODE plant is controlled by a separate neural network controller in a discrete-time closed loop setting. MNIST is a general neural ODE (GNODE) in which an ODE block is embedded in a feed-forward classifier between discrete neural network layers. 

\subsection{Pure neural ODE benchmarks}

\subsubsection{Spiral}\label{apd:Spiral_System_description}

The spiral system is a 2-dimensional neural ODE \citep{chen2018neural} whose trajectories form a spiral in the state space. We consider two variants from the literature, a \emph{nonlinear} variant from \citet{chen2018neural} and a \emph{linear} variant from \citet{manzanas2022nnvode}. The two variants share the same architecture of a 2-layer feed-forward network with a hidden layer of $10$ neurons, in which the input is in $\mathbb{R}^2$, the hidden layer is in $\mathbb{R}^{10}$, and the output is again in $\mathbb{R}^2$. The two variants differ only in the activation function applied to the hidden layer.

\subsubsection*{Nonlinear}\label{apd:Spiral_System_description_nonlinear}

In the nonlinear variant, the activation is $\tanh$, and the neural ODE has the following dynamics
\[
\dot{x} = f(x) = W_2 \tanh(W_1 x + b_1) + b_2,
\]
where $x \in \mathbb{R}^2$, $W_1 \in \mathbb{R}^{10 \times 2}$, $b_1 \in \mathbb{R}^{10}$, $W_2 \in \mathbb{R}^{2 \times 10}$, $b_2 \in \mathbb{R}^{2}$, and $\tanh(\cdot)$ is applied element-wise to the hidden vector $W_1 x + b_1 \in \mathbb{R}^{10}$. The exact values of the weight matrices and bias vectors are defined within the MATLAB function \emph{System\_description.m}.

\subsubsection*{Linear}\label{apd:Spiral_System_description_linear}

In the linear variant, there is no activation function, and the neural ODE has the following dynamics
\[
\dot{x} = f(x) = W_2 (W_1 x + b_1) + b_2,
\]
with $W_1, b_1, W_2, b_2$ of the same shapes as in the nonlinear variant above. Since the hidden layer applies no activation, the dynamics reduce to a 2-dimensional linear ODE $\dot{x} = A x + b$ with $A = W_2 W_1 \in \mathbb{R}^{2 \times 2}$ and $b = W_2 b_1 + b_2 \in \mathbb{R}^{2}$. The exact values of the weight matrices and bias vectors are defined within the MATLAB function \emph{System\_description.m}.

\subsubsection{FPA}\label{apd:FPA_System_description}

The Fixed-Point Attractor (FPA) system is a 5-dimensional neural ODE that admits a fixed-point equilibrium under appropriate conditions, originating from a continuous-time recurrent neural network model in \citet{beer1995dynamics} and adopted as a verification benchmark by \citet{musau2018continuous}. We consider here the same 5-dimensional formulation, with the following neural ODE dynamics
\[
\dot{x} = f(x) = \tau x + W \tanh(x),
\]
where $x \in \mathbb{R}^5$, $\tau = -10^{-6}$ is a small linear-leak coefficient, and $W \in \mathbb{R}^{5 \times 5}$ is a composite recurrent weight matrix
\[
W = \begin{pmatrix} 0_{2 \times 2} & A \\ 0_{3 \times 2} & B A \end{pmatrix},
\]
The matrices $A$ and $B$ together fully specify $W$, and the corresponding values are loaded directly from the MATLAB function \emph{System\_description.m}.

\subsubsection{Cartpole}\label{apd:Cartpole_System_description}

The Cartpole benchmark we use was introduced as a neural ODE reachability problem in \citet{gruenbacher2020lagrangian, gruenbacher2022gotube}. The cart is moving along a frictionless track and it must keep an attached pole upright by applying a horizontal force, with the controller realized as a continuous-time recurrent neural network (CT-RNN). The physical state collects the cart position $x_{\text{cart}}$, cart velocity $v_{\text{cart}}$, pole angle $\theta$, and pole angular velocity $\omega$, and the controller maintains $8$ recurrent hidden states $h_1, \ldots, h_8$ whose dynamics are integrated jointly with the physics, giving an overall $12$-dimensional neural ODE with state vector $x = (x_{\text{cart}}, v_{\text{cart}}, \theta, \omega, h_1, \ldots, h_8)^\top \in \mathbb{R}^{12}$.

The dynamics consists of two coupled blocks. The first is the standard cartpole physics, in which the four physical components of $\dot{x}$ are given by
\[
\dot{x}_{\text{cart}} = v_{\text{cart}}, \quad \dot{v}_{\text{cart}} = a_{\text{cart}}, \quad \dot{\theta} = \omega, \quad \dot{\omega} = a_{\text{pole}},
\]
with the cart and pole accelerations $a_{\text{cart}}$ and $a_{\text{pole}}$ obtained from the standard cartpole equations of motion as functions of $\theta$, $\omega$, and the applied force $F = F_{\text{mag}} \cdot u$, where $F_{\text{mag}} = 10~\text{N}$ is the actuator magnitude and $u \in [-1, 1]$ is the controller's scalar action defined below. 

The second block is the CT-RNN controller, in which the hidden-state vector $h = (h_1, \ldots, h_8)^\top$ evolves according to
\[
\dot{h} = -h + \tanh\!\bigl(W^{\text{rec}} h + W^{\text{in}} p + b\bigr),
\]
where $p = (x_{\text{cart}}, v_{\text{cart}}, \theta, \omega)^\top$ is the physical sub-state fed to the controller, $W^{\text{rec}} \in \mathbb{R}^{8 \times 8}$ is the recurrent weight matrix, $W^{\text{in}} \in \mathbb{R}^{8 \times 4}$ is the input weight matrix, $b \in \mathbb{R}^{8}$ is the bias vector, and $\tanh(\cdot)$ is applied element-wise. The self-decay term $-h$ causes the hidden state to relax exponentially toward zero in the absence of input, while the $\tanh$ term provides a bounded driving signal that mixes the recurrent and input contributions. The scalar action $u$ that drives the cartpole physics in the first block is then computed from the hidden state as
\[
u = \tanh\!\bigl(W^{\text{out}} h\bigr), \qquad W^{\text{out}} \in \mathbb{R}^{1 \times 8},
\]
which produces the bounded action $u \in [-1, 1]$ used in the force expression $F = F_{\text{mag}} \cdot u$. The exact values of the recurrent weights $W^{\text{rec}}$, input weights $W^{\text{in}}$, biases $b$, and readout weights $W^{\text{out}}$ are defined within the MATLAB function \emph{CartpoleCTRNN.m}.

\subsection{Hybrid neural ODE control system benchmarks}

\subsubsection{ACC}\label{apd:ACC_System_description}

The Adaptive Cruise Control (ACC) benchmark, originally introduced as a neural network controlled system reachability problem in \citet{tran2020nnv} and adapted to a neural ODE plant model in \citet{manzanas2022nnvode}, is a hybrid neural ODE control system in which an ego vehicle equipped with an ACC controller must maintain a safe distance from a lead vehicle. The system operates in two modes: a \emph{speed control} mode, in which the ego vehicle tracks a driver-specified set speed $v_{\text{set}} = 30~\text{m/s}$, and a \emph{spacing control} mode, in which it maintains a safe distance from the lead vehicle. The controller is a feed-forward neural network with $5$ hidden layers of $20$ ReLU neurons each and a linear output layer, with a control period of $0.1~\text{s}$. The verification scenario considers the two vehicles to be initially operating at a safe distance with the ego vehicle in speed control mode, after which the lead vehicle suddenly decelerates at $-2~\text{m/s}^2$. The safety specification requires that the relative distance $D_{\text{rel}} = x_{\text{lead}} - x_{\text{ego}}$ stays above the safe distance $D_{\text{safe}} = D_{\text{default}} + t_{\text{gap}} \cdot v_{\text{ego}}$ throughout a $5~\text{s}$ time horizon, with $D_{\text{default}} = 10~\text{m}$ and $t_{\text{gap}} = 1.4~\text{s}$.

We consider here the version with a neural ODE plant model from \citet{manzanas2022nnvode}, consisting of a 3rd-order $8$-dimensional ODE with state vector $x = (x_{\text{lead}}, v_{\text{lead}}, \gamma_{\text{lead}}, x_{\text{ego}}, v_{\text{ego}}, \gamma_{\text{ego}}, a_{\text{ego}}, a_{\text{lead}})^\top \in \mathbb{R}^{8}$, where $x_{\text{lead}}$, $v_{\text{lead}}$, and $\gamma_{\text{lead}}$ denote the lead vehicle position, velocity, and internal acceleration, $x_{\text{ego}}$, $v_{\text{ego}}$, and $\gamma_{\text{ego}}$ denote the corresponding ego vehicle quantities, and $a_{\text{ego}}$, $a_{\text{lead}}$ denote the derivative of the target acceleration commands for the ego and lead vehicles.

The dynamics consists of two kinematic chains, one per vehicle of the following form
\[
\dot{x}_{\text{lead}} = v_{\text{lead}}, \quad \dot{v}_{\text{lead}} = \gamma_{\text{lead}}, \quad \dot{x}_{\text{ego}} = v_{\text{ego}}, \quad \dot{v}_{\text{ego}} = \gamma_{\text{ego}},
\]
together with a 2-layer feed-forward neural network $f_{\text{nn}}$ that drives the four non-kinematic components of $\dot{x}$. The network takes the 4-dimensional sub-state $u = (\gamma_{\text{lead}}, \gamma_{\text{ego}}, a_{\text{ego}}, a_{\text{lead}})^\top$ as input and produces the 4-dimensional output $(\dot{\gamma}_{\text{lead}}, \dot{\gamma}_{\text{ego}}, \dot{a}_{\text{ego}}, \dot{a}_{\text{lead}})^\top = f_{\text{nn}}(u)$. We consider two variants of the plant that differ only in the activation function of $f_{\text{nn}}$.

\subsubsection*{Linear}\label{apd:ACC_System_description_linear}

In the linear variant, $f_{\text{nn}}$ has no activation function and is given by
\[
f_{\text{nn}}(u) = W_2 (W_1 u + b_1) + b_2,
\]
with $W_1 \in \mathbb{R}^{h \times 4}$, $b_1 \in \mathbb{R}^{h}$, $W_2 \in \mathbb{R}^{4 \times h}$, and $b_2 \in \mathbb{R}^{4}$, where
$h$ denotes the hidden layer width of the parameterizing network (with $h=20$ in the trained model used in our experiments). Since the hidden layer applies no activation function, $f_{\text{nn}}$ reduces to an affine map $f_{\text{nn}}(u) = A u + b$ with $A = W_2 W_1 \in \mathbb{R}^{4 \times 4}$ and $b = W_2 b_1 + b_2 \in \mathbb{R}^{4}$. The exact values of the weight matrices and bias vectors are loaded from the MATLAB data file \emph{plant\_3rd\_order\_node.mat}.

\subsubsection*{Nonlinear}\label{apd:ACC_System_description_nonlinear}

In the nonlinear variant, $f_{\text{nn}}$ applies a $\tanh$ activation between the two layers and is given by
\[
f_{\text{nn}}(u) = W_2 \tanh(W_1 u + b_1) + b_2,
\]
where $W_1, b_1, W_2, b_2$ are as in the linear variant and $\tanh(\cdot)$ is applied element-wise to the hidden vector $W_1 u + b_1 \in \mathbb{R}^{h}$. The exact values of the weight matrices and bias vectors are loaded from the MATLAB data file \emph{plant\_3rd\_order\_tanh.mat}.

\subsection{General neural ODE (GNODE) classification benchmarks}
\label{apd:gnode}
\subsubsection{MNIST Classification neural ODE}\label{apd:classification_nODE_System_description}

The MNIST classification benchmark uses the standard MNIST handwritten-digit data set~\citep{lecun1998gradient}, embedded in a general neural ODE (GNODE) classifier architecture introduced by \citet{manzanas2022nnvode}, in which an ODE block is added between standard convolutional layers in a feed-forward digit classifier. The input is a $28 \times 28$ grayscale image and the output is a $10$-dimensional vector of logits, one per digit class. We use the same architecture as in \citet{manzanas2022nnvode}, which combines convolutional pre-processing, a continuous linear ODE block, and a fully-connected output layer. 

\paragraph{Architectures.}
We evaluate the two convolutional variants CNODE\textsubscript{S} and CNODE\textsubscript{M}, which share a layer-wise architecture as illustrated in Table \ref{tab:mnist-arch} and differ only in the convolutional channel count. The two convolutions reduce the spatial dimensions from $28 \times 28$ at the input to $13 \times 13$ at the ODE input, the first convolution does not use padding and produces a $26 \times 26$ feature map, and the second convolution uses padding $1$ and stride $2$ to divide the spatial dimensions to $13 \times 13$. After flattening, the ODE input dimension is $676$ for CNODE\textsubscript{S} and $1690$ for CNODE\textsubscript{M}.

\begin{table}[h]
\centering
\caption{Layer-wise architecture of the MNIST GNODE classifiers CNODE\textsubscript{S} and CNODE\textsubscript{M}.}
\label{tab:mnist-arch}
\small
\begin{tabular}{rlll}
\toprule
\textbf{Layer} & \textbf{Operation} & \textbf{Output (CNODE\textsubscript{S})} & \textbf{Output (CNODE\textsubscript{M})} \\
\midrule
1 & Conv2D                                              & $26 \times 26 \times 4$ & $26 \times 26 \times 10$ \\
2 & BatchNorm                                           & $26 \times 26 \times 4$ & $26 \times 26 \times 10$ \\
3 & ReLU                                                & $26 \times 26 \times 4$ & $26 \times 26 \times 10$ \\
4 & Conv2D                                              & $13 \times 13 \times 4$ & $13 \times 13 \times 10$ \\
5 & BatchNorm                                           & $13 \times 13 \times 4$ & $13 \times 13 \times 10$ \\
6 & ReLU                                                & $13 \times 13 \times 4$ & $13 \times 13 \times 10$ \\
7 & Flatten                                             & $\mathbb{R}^{676}$ & $\mathbb{R}^{1690}$ \\
8 & Linear ODE block (see below)                        & $\mathbb{R}^{676}$ & $\mathbb{R}^{1690}$ \\
9 & FullyConnected ($\mathbb{R}^n \to \mathbb{R}^{10}$) & $\mathbb{R}^{10}$ & $\mathbb{R}^{10}$ \\
\bottomrule
\end{tabular}
\end{table}

\paragraph{ODE block.}
The ODE block (layer 8) has linear dynamics of the following form
\[
  \dot{x} \;=\; A x + b, \qquad x \in \mathbb{R}^n,
\]
with $A \in \mathbb{R}^{n \times n}$ and $b \in \mathbb{R}^n$. The values of $A$, $b$, the convolutional weights, the batch-normalization parameters, and the output-layer weights are obtained by loading the trained MATLAB data files \emph{odecnn\_mnist\_tiny.mat} for CNODE\textsubscript{S} and \emph{odecnn\_mnist\_mid2.mat} for CNODE\textsubscript{M}.

\subsubsection{Hybrid reachability pipeline}\label{apd:classification_nODE_Hybrid_reachability_pipeline}

Verification of these GNODE classifiers requires computing the reachability throughout the entire network. Since the pre-ODE convolutional and ReLU layers and the post-ODE fully-connected layer are discrete neural network layers that TNODEV does not support natively, we therefore construct a hybrid pipeline that uses NNV 2.0 ImageStar reachability for the discrete pre-ODE layers (layers 1--7), TNODEV's reachability block for the linear ODE block (layer 8), and NNV 2.0  Star set reachability for the final fully-connected layer (layer 9).

TNODEV main reachability block is based on CTMM as discussed in Section \ref{subsec:ctmm}, and CTMM is the backend used for all benchmarks in Section~\ref{subsec:safe_set_benchmarks}. For linear ODE block of the MNIST GNODE classifier specifically, we instead use a closed-form matrix-exponential propagator that gives the exact linear flow over the integration interval $[0, t_f]$, followed by the standard interval analysis construction \citep{moore2009introduction} that produces the tightest axis-aligned box enclosing the affine image of a box.

We attempted to apply CTMM to the linear ODE block of this benchmark on a full image $L_\infty$ attack (i.e., all $784$ pixels are perturbed), but CTMM produced no verified images on either network within the refinement iteration $K$ and wall-clock timeout $T_{\max}$, and only on a single pixel attack on individual images did we obtain verified verdicts in approximately $26$ minutes per image. We attribute this to the wrapping effect of integrating the embedded $2n$-dimensional monotone system over the time horizon on a large, densely coupled $A$.
The closed-form matrix-exponential propagator avoids this by encoding the exact linear flow of $\dot{x} = Ax + b$ over $[0, t_f]$ in a single step, with no time integration of an embedded system.
A similar matrix-exponential construction is implemented in NNV 2.0 \texttt{LinearODE} class and is used by~\citet{manzanas2022nnvode} for the linear ODE blocks GNODE benchmarks. The difference between the two pipelines is in what is propagated through this construction, as NNV 2.0 propagates a predicate-constrained Star set, whereas TNODEV propagates an axis-aligned box.

We now make the construction precise and establish its soundness. We state two lemmas characterizing the two pieces of the construction (the closed-form propagator and the tightest box enclosure of the affine image of a box), and combine them in a sound theorem.

\begin{lemma}[Augmented matrix exponential propagator]\label{lem:expm-propagator}
Let $A \in \mathbb{R}^{n \times n}$, $b \in \mathbb{R}^n$, $t_f \in \mathbb{R}$. Define the augmented matrix
\[
  \bar{A} \;=\; \begin{bmatrix} A & b \\ 0 & 0 \end{bmatrix} \in \mathbb{R}^{(n+1) \times (n+1)},
\]
and let $E = \exp(\bar{A} \, t_f)$. Then $E$ has the block form
\[
  E \;=\; \begin{bmatrix} M & c \\ 0 & 1 \end{bmatrix}
\]
for some $M \in \mathbb{R}^{n \times n}$ and $c \in \mathbb{R}^n$.
In addition, for every initial condition $x_0 \in \mathbb{R}^n$, the unique solution $x(\cdot)$ of the linear ODE $\dot{x} = A x + b$ with $x(0) = x_0$ satisfies
\[
  x(t_f) \;=\; M x_0 + c.
\]
\end{lemma}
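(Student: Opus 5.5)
The plan is to lift the affine ODE to a homogeneous linear ODE in one extra dimension and then read off the block structure of the exponential.

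\emph{Block form.} Since the last row of $\bar{A}$ is zero, every power $\bar{A}^k$ with $k \ge 1$ also has a zero last row. Writing $\bar{A}^k$ in block form, induction gives
\[
\bar{A}^k = \begin{bmatrix} A^k & A^{k-1} b \\ 0 & 0 \end{bmatrix} \qquad (k \ge 1).
\]
Summing the exponential series $E = \sum_{k\ge0} \bar{A}^k t_f^k / k!$ block by block then yields the claimed structure:
\[
M = \sum_{k \ge 0} \frac{(A t_f)^k}{k!} = e^{A t_f}, \qquad c = \sum_{k\ge1} \frac{t_f^k}{k!} A^{k-1} b,
\]
and the bottom row equals $[0 \;\; 1]$. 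The only contribution to the bottom-right entry is the identity term $k=0$. Both series converge absolutely, since the exponential series of any matrix does, so the blockwise rearrangement is legitimate.

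\emph{Flow identity.} Consider the augmented state $z(t) = (x(t), 1)^\top \in \R^{n+1}$, where $x$ solves $\dot x = Ax + b$ with $x(0)=x_0$. Direct computation gives $\dot z = (Ax + b,\, 0)^\top = \bar{A} z$, with $z(0) = (x_0, 1)^\top$. The linear ODE $\dot z = \bar{A} z$ has globally Lipschitz right-hand side, so its solution is unique and equal to $z(t) = \exp(\bar{A} t)\, z(0)$; this holds for any real $t$, including a negative $t_f$. Evaluating at $t_f$ and using the block form,
\[
z(t_f) = E z(0) = (M x_0 + c,\, 1)^\top.
\]
Taking the first $n$ components gives $x(t_f) = M x_0 + c$.

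The argument also needs the existence and uniqueness of $x$ itself. This follows from Picard--Lindel\"of, since $x \mapsto Ax+b$ is globally Lipschitz, so solutions exist on all of $\R$.

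There is essentially no real obstacle. The only point needing care is justifying $z(t) = \exp(\bar{A}t) z(0)$. I would do this by differentiating the power series term by term, which is valid by uniform convergence on compact intervals, to get $\tfrac{d}{dt}\exp(\bar{A}t) = \bar{A}\exp(\bar{A}t)$, and then invoking uniqueness. As a cross-check, if $A$ is invertible then $c = A^{-1}(e^{At_f} - I)b$, which agrees with the variation-of-constants formula.
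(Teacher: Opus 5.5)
Your proposal is correct and follows essentially the same route as the paper: the zero bottom row of $\bar{A}^k$ for $k \ge 1$ gives the block form of $E$, and lifting to $z = (x^\top, 1)^\top$ with $\dot z = \bar{A} z$ gives $x(t_f) = M x_0 + c$. Your explicit identification $M = e^{A t_f}$ and $c = \sum_{k\ge1} \frac{t_f^k}{k!}A^{k-1}b$ is more detailed than the paper's proof, and it agrees with the remark the paper makes right after it.
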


\begin{proof}
We first establish the block structure of $E$. The bottom row of $\bar{A}$ is zero, so for every series index $j \ge 1$ the bottom row of $\bar{A}^j$ is zero (the bottom row of a matrix product is a weighted some of the bottom row of its left factor). The matrix exponential admits the absolutely convergent series expansion $\exp(\bar{A} \, t_f) = \sum_{j=0}^{\infty} \frac{(\bar{A} \, t_f)^j}{ j!}$, in which only the $j=0$ term contributes a nonzero bottom row, namely $[0, \ldots, 0, 1]$ from the identity. Hence $E$ has bottom row $[0, \ldots, 0, 1]$, giving the claimed block form with $M = E_{[1:n, 1:n]}$ and $c = E_{[1:n, n+1]}$.

We now derive the solution formula. Define the augmented state $z(t) := (x(t)^\top, 1)^\top \in \mathbb{R}^{n+1}$. Differentiating gives
\[
  \dot{z}(t) \;=\; \begin{pmatrix} \dot{x}(t) \\ 0 \end{pmatrix} \;=\; \begin{pmatrix} A x(t) + b \\ 0 \end{pmatrix} \;=\; \begin{bmatrix} A & b \\ \mathbf{0} & 0 \end{bmatrix} \begin{pmatrix} x(t) \\ 1 \end{pmatrix} \;=\; \bar{A} z(t),
\]
so $z$ satisfies a homogeneous linear ODE with constant matrix $\bar{A}$. The unique solution of this ODE with initial condition $z(0) = (x_0^\top, 1)^\top$ is $z(t_f) = \exp(\bar{A} \, t_f) \, z(0) = E \, (x_0^\top, 1)^\top$. Reading off the top $n$ components yields $x(t_f) = M x_0 + c$.
\end{proof}

The construction in Lemma~\ref{lem:expm-propagator} relies on the standard affine-to-homogeneous lift of a linear time-invariant system \citep{antsaklis2007linear}, in which a constant unit coordinate is appended to the state so that the affine system $\dot{x} = A x + b$ becomes the homogeneous system $\dot{z} = \bar{A} z$ for the augmented state $z = (x^\top, 1)^\top$. The matrix $M$ defined in Lemma~\ref{lem:expm-propagator} then coincides with the state transition matrix $\exp(A \, t_f)$, and the offset $c$ coincides with the variation of constants integral $\int_0^{t_f} \exp(A (t_f - s)) \, b \, ds$. The augmented matrix construction recovers both quantities from a single matrix exponential of $\bar{A}$, and no explicit numerical integration is required.

\begin{lemma}[Tightest box enclosure of the affine image of a box]\label{lem:interval-affine-image}
Let $M \in \mathbb{R}^{n \times n}$, $c \in \mathbb{R}^n$, and let $[\underline{x}, \overline{x}] \subseteq \mathbb{R}^n$ with $\underline{x} \le \overline{x}$ component-wise. Define $M_+ = \max(M, \mathbf{0})$ and $M_- = \min(M, \mathbf{0})$ element-wise. Then the smallest axis-aligned box $[\underline{y}, \overline{y}]$ containing the set $\{M x + c : x \in [\underline{x}, \overline{x}]\}$ is given by
\[
  \underline{y} \;=\; M_+ \underline{x} + M_- \overline{x} + c, \qquad \overline{y} \;=\; M_+ \overline{x} + M_- \underline{x} + c.
\]
\end{lemma}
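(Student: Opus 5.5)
The smallest axis-aligned box containing a set $S$ is the product of the coordinate projections' closed hulls: $[\underline{y}, \overline{y}]$ with $\underline{y}_i = \inf_{y \in S} y_i$ and $\overline{y}_i = \sup_{y \in S} y_i$. Indeed, any box containing $S$ must have $i$-th lower endpoint at most $\inf_{y\in S} y_i$ and upper endpoint at least $\sup_{y\in S} y_i$. Conversely, the box built from these infima and suprema contains $S$. The proof therefore reduces to computing, for each row $i$, the minimum and maximum of the linear function $x \mapsto \sum_j M_{ij} x_j + c_i$ over the box $[\underline{x}, \overline{x}]$. Since the box is compact and the function continuous, these are attained, so the endpoints are minima and maxima.

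The key step is separability. Because the box is a Cartesian product, the optimization decouples across $j$:
\[
\min_{x \in [\underline{x}, \overline{x}]} \sum_j M_{ij} x_j = \sum_j \min_{x_j \in [\underline{x}_j, \overline{x}_j]} M_{ij} x_j ,
\]
and similarly for the maximum. For each scalar term I will split on the sign of $M_{ij}$. If $M_{ij} \ge 0$, the minimum of $M_{ij}x_j$ is $M_{ij}\underline{x}_j$ and the maximum is $M_{ij}\overline{x}_j$. If $M_{ij} < 0$, the roles are swapped. Both cases are captured uniformly by writing $M_{ij} = (M_+)_{ij} + (M_-)_{ij}$, where exactly one summand is possibly nonzero. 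This gives the minimum $(M_+)_{ij}\underline{x}_j + (M_-)_{ij}\overline{x}_j$ and the maximum $(M_+)_{ij}\overline{x}_j + (M_-)_{ij}\underline{x}_j$. Summing over $j$ and adding $c_i$ yields exactly the $i$-th components of the stated $\underline{y}$ and $\overline{y}$.

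To make tightness explicit, I will exhibit for each $i$ a vertex $x^{(i)}$ of the box attaining the bound: take $x^{(i)}_j = \underline{x}_j$ if $M_{ij} \ge 0$ and $x^{(i)}_j = \overline{x}_j$ otherwise. This point lies in the box and gives $(Mx^{(i)}+c)_i = \underline{y}_i$; the analogous vertex attains $\overline{y}_i$. This shows that no smaller box can contain the image. Note that different rows may require different vertices, which is precisely why the image is generally not itself a box.

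There is no real obstacle here. The only point needing care is stating precisely what ``smallest'' means, namely minimal under inclusion, equivalently the intersection of all boxes containing the set. It then has to be justified that the coordinatewise infima and suprema define that box, which follows from the two-sided argument in the first paragraph.
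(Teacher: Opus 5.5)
Your proposal is correct and follows essentially the same route as the paper: row-wise decoupling of the minimum and maximum over the box, the sign case split on $M_{ij}$ recombined via $M_+$ and $M_-$, and attainment at row-dependent corners to establish tightness. Your extra paragraph justifying that the coordinatewise infimum/supremum box is the inclusion-minimal enclosing box makes explicit a step the paper asserts only briefly.
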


\begin{proof}
For a fixed row index $i \in \{1, \ldots, n\}$, the $i$-th component of the affine map is $(M x + c)_i = \sum_{j=1}^n M_{ij} \, x_j + c_i$. The terms decouple across $j$, so the minimum of $(M x + c)_i$ over $x \in [\underline{x}, \overline{x}]$ is the sum of the per-coordinate minima of $M_{ij} \, x_j$ over $x_j \in [\underline{x}_j, \overline{x}_j]$, plus $c_i$. We compute these per-coordinate minima as follows:
\begin{itemize}
\item If $M_{ij} \ge 0$, then $M_{ij} \, x_j$ is non-decreasing in $x_j$, so its minimum over $[\underline{x}_j, \overline{x}_j]$ is $M_{ij} \, \underline{x}_j$. In this case $(M_+)_{ij} = M_{ij}$ and $(M_-)_{ij} = 0$, so $M_{ij} \, \underline{x}_j = (M_+)_{ij} \, \underline{x}_j + (M_-)_{ij} \, \overline{x}_j$.
\item If $M_{ij} < 0$, then $M_{ij} \, x_j$ is decreasing in $x_j$, so its minimum over $[\underline{x}_j, \overline{x}_j]$ is $M_{ij} \, \overline{x}_j$. In this case $(M_+)_{ij} = 0$ and $(M_-)_{ij} = M_{ij}$, so $M_{ij} \, \overline{x}_j = (M_+)_{ij} \, \underline{x}_j + (M_-)_{ij} \, \overline{x}_j$.
\end{itemize}
In both cases the per-coordinate minimum equals $(M_+)_{ij} \, \underline{x}_j + (M_-)_{ij} \, \overline{x}_j$. Summing over $j$ and adding $c_i$ yields
\[
  \min_{x \in [\underline{x}, \overline{x}]} (M x + c)_i \;=\; \sum_{j=1}^n \left[ (M_+)_{ij} \, \underline{x}_j + (M_-)_{ij} \, \overline{x}_j \right] + c_i \;=\; (M_+ \underline{x} + M_- \overline{x} + c)_i \;=\; \underline{y}_i.
\]
A symmetric argument with minima replaced by maxima yields $\max_{x \in [\underline{x}, \overline{x}]} (M x + c)_i = (M_+ \overline{x} + M_- \underline{x} + c)_i = \overline{y}_i$. Both extrema are attained by points in $[\underline{x}, \overline{x}]$ (specifically, by the corners identified above, which may differ for different rows $i$). Therefore $[\underline{y}, \overline{y}]$ contains the image set and no smaller axis-aligned box has this property.
\end{proof}

We can now combine the two lemmas to establish the soundness of the linear ODE interval reachability used by TNODEV on this benchmark.

\begin{theorem}[Soundness of the closed-form interval reachability for the linear ODE block]\label{thm:linear-ode-interval-soundness}
Let $A \in \mathbb{R}^{n \times n}$, $b \in \mathbb{R}^n$, $t_f \in \mathbb{R}$, and let $[\underline{x}_0, \overline{x}_0] \subseteq \mathbb{R}^n$. Let $M, c$ be as in Lemma~\ref{lem:expm-propagator} and $M_+, M_-$ as in Lemma~\ref{lem:interval-affine-image}. Define
\[
  \underline{x}_{t_f} \;=\; M_+ \underline{x}_0 + M_- \overline{x}_0 + c, \qquad \overline{x}_{t_f} \;=\; M_+ \overline{x}_0 + M_- \underline{x}_0 + c.
\]
Then for every initial condition $x_0 \in [\underline{x}_0, \overline{x}_0]$, the unique solution $x(\cdot)$ of $\dot{x} = A x + b$ with $x(0) = x_0$ satisfies $x(t_f) \in [\underline{x}_{t_f}, \overline{x}_{t_f}]$. Moreover, $[\underline{x}_{t_f}, \overline{x}_{t_f}]$ is the smallest axis-aligned box containing the reachable set $\{\Phi(t_f, x_0) : x_0 \in [\underline{x}_0, \overline{x}_0]\}$. 
\end{theorem}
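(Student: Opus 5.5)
The theorem follows by composing Lemma~\ref{lem:expm-propagator} with Lemma~\ref{lem:interval-affine-image}; the only real work is identifying the reachable set at time $t_f$ with an affine image of the input box.

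\emph{Step 1: describe the reachable set.} Since $f(x)=Ax+b$ is globally Lipschitz, every initial condition $x_0$ has a unique global solution, so $\Phi(t_f,x_0)$ is well defined. Lemma~\ref{lem:expm-propagator} gives $\Phi(t_f,x_0)=Mx_0+c$ for every $x_0\in\mathbb{R}^n$, with $M,c$ read off from $E=\exp(\bar A t_f)$. Hence
\[
\{\Phi(t_f,x_0) : x_0\in[\underline{x}_0,\overline{x}_0]\} \;=\; \{Mx_0+c : x_0\in[\underline{x}_0,\overline{x}_0]\}.
\]
This is an equality of sets, not merely an inclusion. Nothing in Lemma~\ref{lem:expm-propagator} requires $t_f\ge 0$, so the statement holds for any real $t_f$.

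\emph{Step 2: apply the box-enclosure lemma.} Apply Lemma~\ref{lem:interval-affine-image} with $[\underline{x},\overline{x}]=[\underline{x}_0,\overline{x}_0]$ and the pair $(M,c)$ from Step~1. Its formulas for $\underline{y},\overline{y}$ coincide with the definitions of $\underline{x}_{t_f},\overline{x}_{t_f}$ in the theorem. The lemma states that this box is the smallest axis-aligned box containing the affine image; in particular it contains the image.

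\emph{Step 3: conclude both claims.} Containment gives soundness: for each $x_0$ in the input box, $x(t_f)=Mx_0+c$ lies in the image, hence in $[\underline{x}_{t_f},\overline{x}_{t_f}]$. Minimality transfers directly because the image equals the reachable set by Step~1. Concretely, each bound $\underline{x}_{t_f,i}$ and $\overline{x}_{t_f,i}$ is attained by the corner of the input box identified in the proof of Lemma~\ref{lem:interval-affine-image}, and that corner is a genuine initial condition. So no strictly smaller box can contain the reachable set.

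I expect no real obstacle. The one point to handle with care is that minimality needs the set \emph{equality} from Step~1, not just an over-approximation of the reachable set. This equality holds because Lemma~\ref{lem:expm-propagator} is exact for every initial condition. The proof also implicitly assumes $\underline{x}_0\le\overline{x}_0$ componentwise so that the input box is nonempty, which I will state explicitly.
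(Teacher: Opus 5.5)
Your proposal is correct and follows essentially the same route as the paper's proof: Lemma~\ref{lem:expm-propagator} identifies the reachable set with the affine image $\{Mx_0+c : x_0\in[\underline{x}_0,\overline{x}_0]\}$, and Lemma~\ref{lem:interval-affine-image} then gives both containment and minimality. Two of your points are small clarifications of the paper's wording: you state this identification as a set equality rather than the paper's looser ``in bijection,'' and you make explicit the assumption $\underline{x}_0\le\overline{x}_0$ that Lemma~\ref{lem:interval-affine-image} requires.
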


\begin{proof}
Let $x_0 \in [\underline{x}_0, \overline{x}_0]$ be an arbitrary initial condition, and let $x(\cdot)$ denote the unique solution of $\dot{x} = A x + b$ with $x(0) = x_0$. By Lemma~\ref{lem:expm-propagator}, $x(t_f) = M x_0 + c$, so $x(t_f) \in \mathcal{R} := \{M x + c : x \in [\underline{x}_0, \overline{x}_0]\}$. By Lemma~\ref{lem:interval-affine-image}, the smallest axis-aligned box containing $\mathcal{R}$ is exactly $[\underline{x}_{t_f}, \overline{x}_{t_f}]$. Hence $x(t_f) \in [\underline{x}_{t_f}, \overline{x}_{t_f}]$, and no smaller axis-aligned box can contain the reachable set $\mathcal{R}$ since by Lemma~\ref{lem:expm-propagator} the reachable set is in bijection with the affine image $\{M x_0 + c : x_0 \in [\underline{x}_0, \overline{x}_0]\}$.
\end{proof}

The output bounds in Theorem~\ref{thm:linear-ode-interval-soundness} are evaluated in a single pass, as computing $\underline{x}_{t_f}$ and $\overline{x}_{t_f}$ requires one matrix exponential (precomputed once per network) and two matrix-vector products per image (two for the lower bound  and two for the upper bound). The output interval $[\underline{x}_{t_f}, \overline{x}_{t_f}]$ is the tightest sound axis-aligned over-approximation of the reachable set. The reachable set itself is in general a zonotope-like polytope (the affine image of an axis-aligned box under $M$) and not exactly a box. This is precisely why TNODEV's interval representation is lossy on this benchmark compared to NNV 2.0 Star set representation, which can capture the polytope structure exactly.

\subsubsection{Star-set to interval conversion at the ODE input}\label{apd:classification_nODE_star_to_interval}

The pre-ODE layers (layers~1--7) produce a Star set representation $R_{\mathrm{pre}}$ with predicate variables $\alpha$ constrained by linear inequalities $C \alpha \le d$. NNV 2.0 approximate ReLU reachability adds these inequality constraints whenever a neuron's input range contains both positive and negative values, the constraints carry information that the interval box hull alone does not. Since our closed-form propagator operates on intervals, the pipeline involves a Star set $\to$ interval $\to$ Star set conversion before and after the ODE block boundary. We extract per-coordinate bounds $[\underline{x}_{\mathrm{pre}}, \overline{x}_{\mathrm{pre}}]$ from $R_{\mathrm{pre}}$ at the ODE input, propagate them through the closed-form propagator (Theorem~\ref{thm:linear-ode-interval-soundness}) to obtain $[\underline{x}_{\mathrm{ode}}, \overline{x}_{\mathrm{ode}}]$, and then re-encode this interval as a Star set (specifically, a Star set whose predicate variables are bounded by the box $[\underline{x}_{\mathrm{ode}}, \overline{x}_{\mathrm{ode}}]$) so that NNV 2.0 exact Star set reachability can run on the final fully-connected layer.

The forward conversion (Star set $\to$ interval) discards the predicate constraints $C \alpha \le d$. We provide two sound modes that differ in whether they exploit these constraints when extracting bounds:
\begin{itemize}
\item \emph{LP-tight} uses NNV~2.0  \texttt{getRanges()}, which solves one linear program per output coordinate to obtain the tightest sound per-coordinate bounds that respects the predicate constraints $C \alpha \le d$. The mode inherits NNV 2.0 reliance on \texttt{linprog}, which we observed to occasionally encounter degenerate LP geometries on individual Star sets.
\item \emph{Fast} uses NNV 2.0 \texttt{getBoxFast()}, which ignores the predicate constraints and treats $\alpha$ as a hyper-rectangle. The resulting bounds remain sound but are looser whenever predicate constraints are active.
\end{itemize}

The reverse conversion (interval $\to$ Star set) is direct: given the post-propagator interval $[\underline{x}_{\mathrm{ode}}, \overline{x}_{\mathrm{ode}}]$, we construct a Star set with no inequality predicate constraints whose predicate variables vary independently within these bounds.

We report both forward-conversion modes in Table~\ref{tab:mnist-cnode} to separate the contribution of bound-extraction tightness at the boundary from the interval-propagation lossiness incurred by the closed-form propagator. On this benchmark the two modes return identical verdict counts, indicating that the interval over-approximation through the linear ODE is the binding constraint rather than the predicate-constraint information at the boundary.

\paragraph{Iterative refinement.}
Neither of the two tools uses iterative refinement for this benchmark. Indeed, NNV 2.0 does not currently implement input-set refinement, so to keep the comparison meaningful we also use TNODEV without refinements. We note that if refinement is added, it will be applied to the global GNODE input set rather than to the ODE block input only, even though NNV 2.0 Star set propagation through $\exp(A t_f)$ is exact at the ODE layer, the pre-ODE convolutional and ReLU layers use approximate reachability, so exactness at the ODE layer alone does not make global refinement unnecessary. For TNODEV, refinement on the global GNODE input would split the input set into sub-sets, propagate each through the full hybrid pipeline, and take the union of the resulting verdicts. We leave this global input-set refinement strategy as a future work.

\subsubsection{Where the verification gap comes from}\label{apd:classification_nODE_verification_gap}

The 20-image gap between TNODEV and NNV 2.0 originates entirely in the linear ODE block. NNV 2.0 propagates the predicate-constrained Star set representation through the linear ODE exactly, while TNODEV interval propagation produces the bounding box of the affine image of the input box (Theorem~\ref{thm:linear-ode-interval-soundness}): a sound enclosure, but a strict super-set of the actual image set whenever the input box's image under $M$ is not itself a box. The characteristic worst-case widening of this enclosure can be summarized by the propagator's $L_\infty$ norm: $\|M\|_\infty \approx 10.5$ on CNODE\textsubscript{S} and $\|M\|_\infty \approx 14.1$ on CNODE\textsubscript{M}, where $M$ is the propagator matrix from Lemma~\ref{lem:expm-propagator}. The quantity $\|M\|_\infty$ is the maximum row sum of $|M|$; if every input coordinate has width $\delta$ then the maximum output coordinate width is bounded by $\|M\|_\infty \cdot \delta$. This is a diagnostic characterization of the interval hull wrapping, not an additional factor applied during reachability, and the interval propagation itself is a single closed-form evaluation. The actual aggregate width expansion observed on representative test images is roughly $7\times$ on CNODE\textsubscript{S} ($\sum w_{\mathrm{post}} / \sum w_{\mathrm{pre}} = 6.91$) and $8\times$ on CNODE\textsubscript{M} (ratio $8.36$) at $\epsilon=0.5/255$. After this widening, NNV2.0 tighter post-ODE Star set representation still produces logit bounds where the target-class lower bound dominates the other classes' upper bounds on nearly every test image (the condition required for $\epsilon$-robustness). TNODEV looser post-ODE bounding box only achieves this dominance on the more confidently classified images.

\subsubsection{Applicability of TNODEV on this benchmark}\label{apd:classification_nODE_applicability}

TNODEV is sound and applicable across this benchmark in the sense that the hybrid pipeline runs end-to-end on every test image and produces a sound (over-approximation) verdict. However, the interval-based propagation through the linear ODE block makes TNODEV best suited to small adversarial perturbations on this class of GNODE classifiers, where the interval hull enclosure of the linear ODE image remains tight enough to satisfy the per-class dominance condition required for $\epsilon$-robustness. At $\epsilon = 0.5/255$ (Table~\ref{tab:mnist-cnode}) TNODEV verifies $58$--$60\%$ of test images. At $\epsilon = 1/255$, twice this magnitude, the verification rate falls to $1/50$ on both networks under both bound-extraction modes, while NNV 2.0 retains $49/50$ on CNODE\textsubscript{S} and $50/50$ on CNODE\textsubscript{M}~\citep{manzanas2022nnvode}. For verification under larger adversarial perturbations on this class of linear ODE based classifiers, NNV 2.0 Star set representation remains a more appropriate tool choice than TNODEV interval based propagation.

\section{Example Results}\label{apd:Detailed_results}

\subsection{Reachability comparison of TNODEV against NNV~2.0 and CORA}
\label{apd:reachability_comparison}

This appendix provides more details on the comparison of the reachability analysis results from Section~\ref{subsec:reachability} between TNODEV, NNV 2.0, and CORA on the Spiral nonlinear and FPA benchmarks. For each tool, we measure the wall-clock time to compute a single reachable set over-approximation $\Omega$ from the initial input set $\Xin$, together with the size of the resulting over-approximation. The size metric we report is the \emph{geometric mean width}:

\begin{equation}
\mu(\Omega) \;=\; \sqrt[n]{\mathrm{Vol}(\Omega)},
\label{eq:vol_per_dim}
\end{equation}

where $n$ is the state dimension and $\mathrm{Vol}(\Omega)$ is the volume of the over-approximation. The $n$-th root makes the metric dimensionally comparable across benchmarks of different state dimensions, i.e., $\mu(\Omega)$ has the same units as a single state coordinate, and is exactly the side length of the $n$-dimensional cube of the same volume as $\Omega$. Smaller $\mu(\Omega)$ indicates a tighter over-approximation. For an axis-aligned interval over-approximation $\Omega = [\underline{x}, \overline{x}]$, \eqref{eq:vol_per_dim} simplifies to the geometric mean of the component widths:

\begin{equation}
\mu(\Omega) \;=\; \Bigl(\prod_{i=1}^{n} (\overline{x}_i - \underline{x}_i)\Bigr)^{1/n}.
\end{equation}

\paragraph{Implementation per tool.}
The three tools represent the reachable set differently, so the volume in \eqref{eq:vol_per_dim} is computed differently for each of them:

\begin{itemize}
    \item \textbf{TNODEV} (intervals): $\Omega$ is an axis-aligned interval and $\mathrm{Vol}(\Omega) = \prod_{i=1}^{n}(\overline{x}_i - \underline{x}_i)$.
    \item \textbf{CORA} (zonotopes): $\Omega$ is a zonotope and $\mathrm{Vol}(\Omega)$ is obtained from CORA built-in \texttt{volume()} function, which sums the parallelotope contributions induced by the zonotope's generators.
    \item \textbf{NNV 2.0} (Star set): $\Omega$ is a Star set and $\mathrm{Vol}(\Omega)$ is computed as the volume of the tightest axis-aligned box enclosing the Star set, obtained from NNV 2.0 built-in \texttt{getBox()} function.\footnote{Exact polytope-volume computation in higher dimensions is prohibitively expensive \citep{dyer1988complexity,khachiyan1989problem}, so the box-hull volume is used uniformly across both benchmarks for the NNV 2.0.}
\end{itemize}

\paragraph{Spiral nonlinear results.}
Table \ref{tab:reach_comp_spiral_non} reports the comparison on the Spiral nonlinear 2D benchmark. TNODEV is \textbf{14} times faster than NNV 2.0 and \textbf{83} times faster than CORA, while the over-approximations produced by NNV 2.0 and CORA are approximately \textbf{4} and \textbf{4.3} times tighter than TNODEV. This reflects the cost of axis-aligned interval propagation when the actual reachable set is not axis-aligned, i.e., the spiral dynamics rotate the input box and the resulting interval over-approximation widens substantially because it must contain the entire rotated set in an axis-aligned bounding box, an effect that NNV 2.0 and CORA mitigate through their more complex set representations.

\begin{table}[h]
\centering
\caption{Reachability comparison on Spiral nonlinear. 
}
\label{tab:reach_comp_spiral_non}
\small
\begin{tabular}{lccr}
\toprule
\textbf{Tool} & \textbf{Set representation} & \textbf{$\mu(\Omega)$} & \textbf{Time (s)} \\
\midrule
TNODEV   & Intervals  & 2.031 & 0.30  \\
NNV~2.0  & Star set   & 0.507  & 4.20  \\
CORA     & Zonotopes  & 0.467  & 24.99 \\
\bottomrule
\end{tabular}
\end{table}

\paragraph{FPA results.}
Table \ref{tab:reach_comp_fpa} reports the comparison on the FPA 5D benchmark. TNODEV is approximately \textbf{7} times faster than NNV 2.0 and \textbf{39} times faster than CORA, while TNODEV interval over-approximation is approximately \textbf{1.3} times tighter than NNV 2.0 Star set, while CORA's zonotope remains the tightest at approximately \textbf{1.4} times tighter than TNODEV.

\begin{table}[h]
\centering
\caption{Reachability comparison on FPA.}
\label{tab:reach_comp_fpa}
\small
\begin{tabular}{lccccr}
\toprule
\textbf{Tool} & \textbf{Set representation} & \textbf{$\mu(\Omega)$} & \textbf{Time (s)} \\
\midrule
TNODEV   & Intervals  & 0.030 & 0.32  \\
NNV~2.0  & Star set   & 0.040  & 2.16  \\
CORA     & Zonotopes  & 0.022  & 12.34 \\
\bottomrule
\end{tabular}
\end{table}

Across the two benchmarks, the reachability comparison exposes a consistent trade-off: TNODEV provides the lowest per-call wall-clock time, at the cost of wider over-approximations when the reachable set is far from axis-aligned. The Spiral nonlinear benchmark illustrates a setting where intervals are not the natural set representation, as rotational dynamics produce a reachable set that is poorly approximated by any axis-aligned box. The FPA benchmark illustrates a setting where the reachable set is closer to an axis-aligned box, and interval propagation is competitive with the box hull of richer representations. The verifier architecture of Section~\ref{sect:verif_arch} offers a solution for the Spiral nonlinear case, as instead of relying on a single reachability call to be tight, TNODEV wraps a fast reachability backend based on CTMM in the verification and refinement loop, splitting the input set and re-computing the reachability on each sub-cell until each sub-cell's over-approximation becomes tight enough for the property to be certified, or the refinement iteration limit is reached.





\subsection{Spiral Linear}\label{apd:Linear_Spiral_spec}

The safety specification is:
\begin{itemize}
\item \textit{Initial set:} $\Xin = [1.8, 2.2] \times [-0.2, 0.2]$
\item \textit{Safe set:} $\mathcal{X}_s = [-2,\, 0.1] \times [1.2,\, 5]$
\end{itemize}
The initial set follows the reachability setup of \citet{manzanas2022nnvode}, while the safe set is defined by us for this verification task. Figure \ref{fig:Linear_Spiral_results} shows the verified subsets obtained with the naive and MSIR heuristics, both certify the specification as \texttt{SAFE}, with the union of the \textcolor{violet}{violet} verified subsets covering the initial set and remaining inside the \textcolor{Green}{green} safe set.

\begin{figure}[H]
\centering
\begin{tabular}{@{}c@{\hspace{0.5em}}c@{}}
\includegraphics[width=0.48\textwidth]{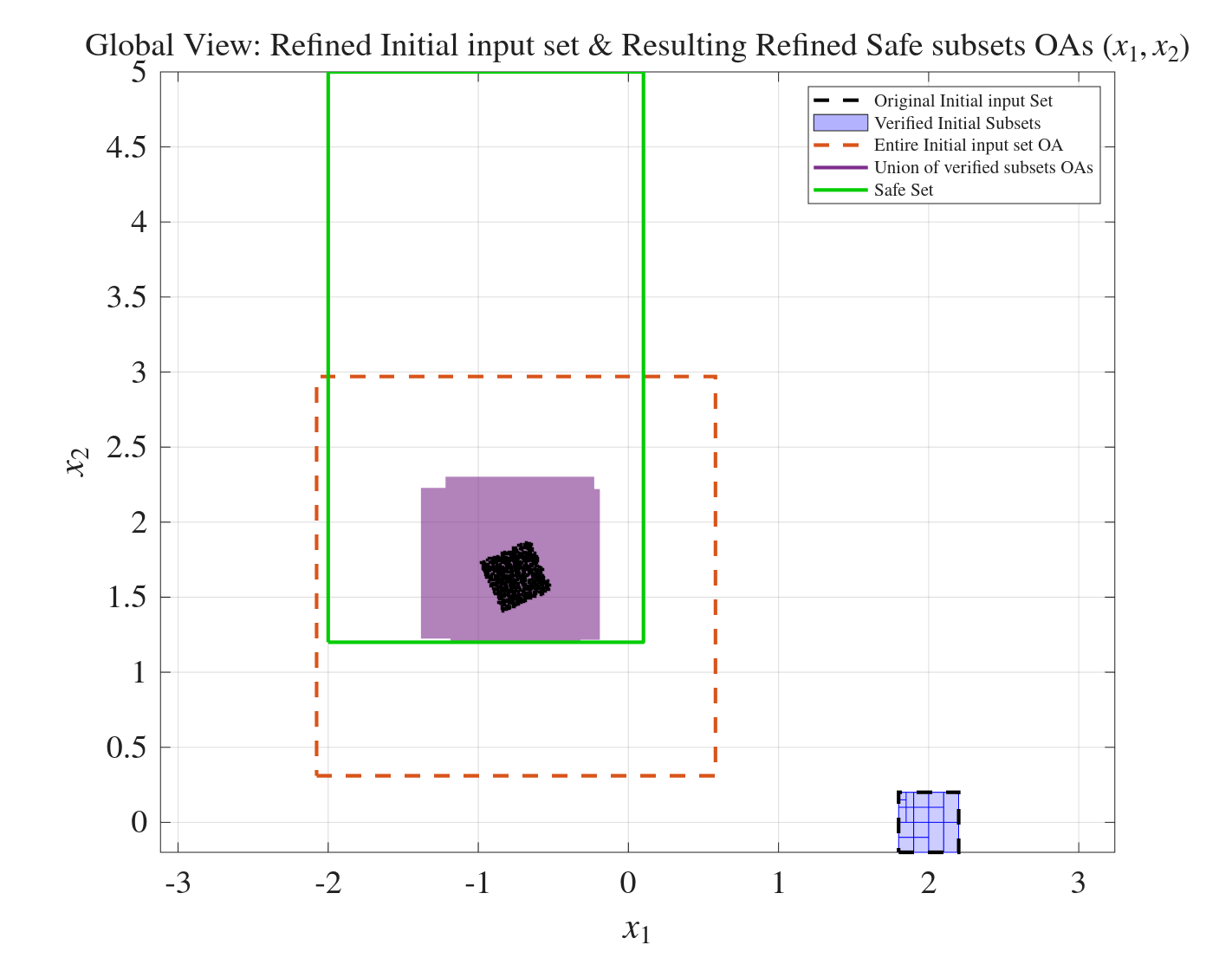} &
\includegraphics[width=0.48\textwidth]{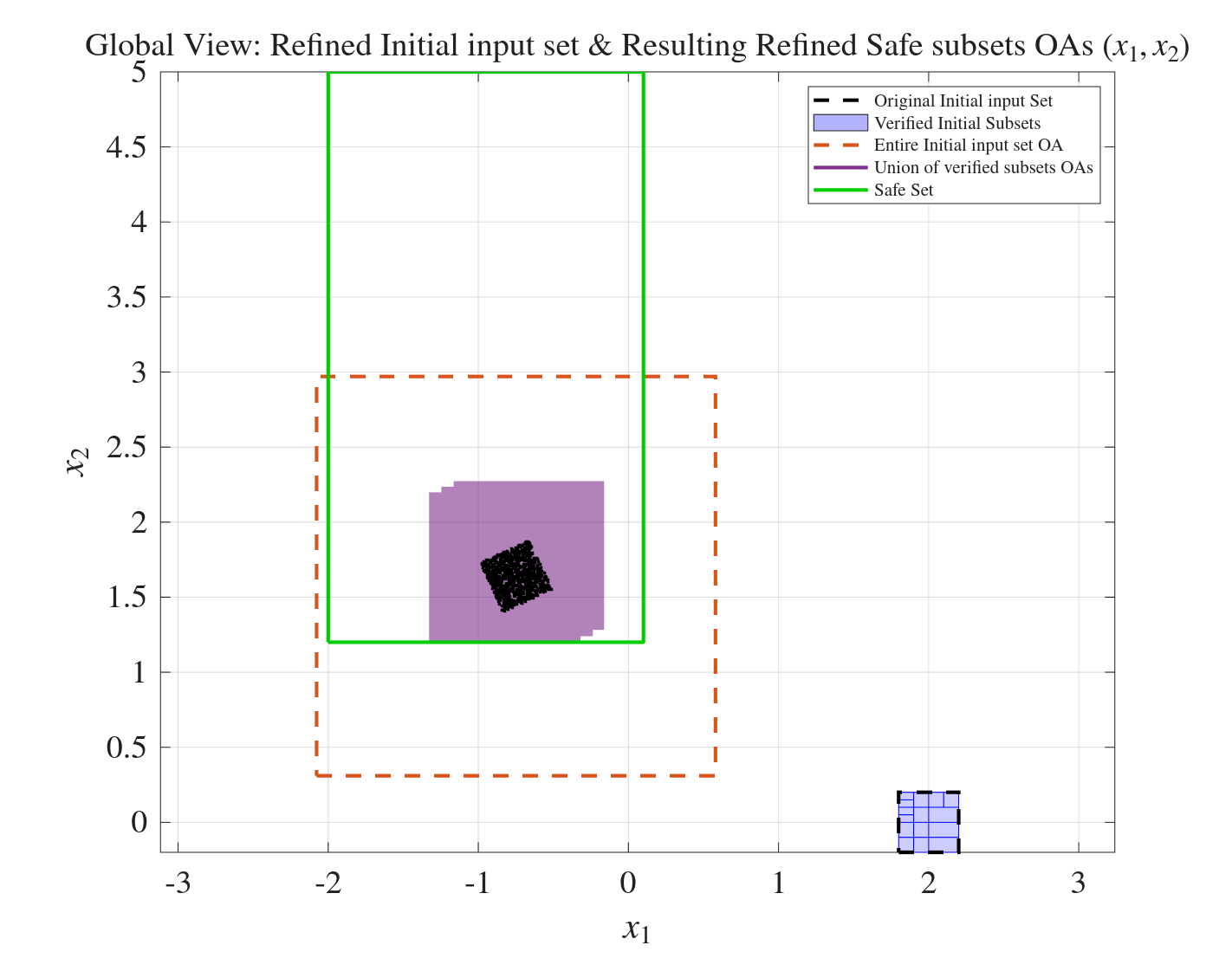} \\
(a) naive & (b) MSIR
\end{tabular}
\caption{TNODEV verification results on the linear spiral~2D benchmark using naive and MSIR refinement. In all sub figures, the dashed \textcolor{red}{red} rectangle is the over-approximation $\Omega(\Xin)$ from a single reachability call on the unsplit initial set, the \textcolor{Green}{green} rectangle is the safe set $\mathcal{X}_s$, the \textcolor{violet}{violet} filled rectangles represent the union of the verified subsets, and the black dots are sampled successor trajectories.}
\label{fig:Linear_Spiral_results}
\end{figure}

\subsection{Spiral nonlinear}\label{apd:Nonlinear_Spiral_spec}

The safety specification is similar to spiral linear in Appendix \ref{apd:Linear_Spiral_spec}.
Here as well, the initial set follows the reachability setup of \citet{manzanas2022nnvode}, while the safe set is defined by us for this verification task. Figure~\ref{fig:Nonlinear_Spiral_results} shows the verified subsets obtained with the three refinement heuristics. The naive and MSIR heuristics certify the specification \texttt{SAFE}; ING also certifies it but requires more refinement iterations.

\begin{figure}[H]
\centering
\begin{tabular}{@{}c@{\hspace{0.5em}}c@{}}
\includegraphics[width=0.48\textwidth]{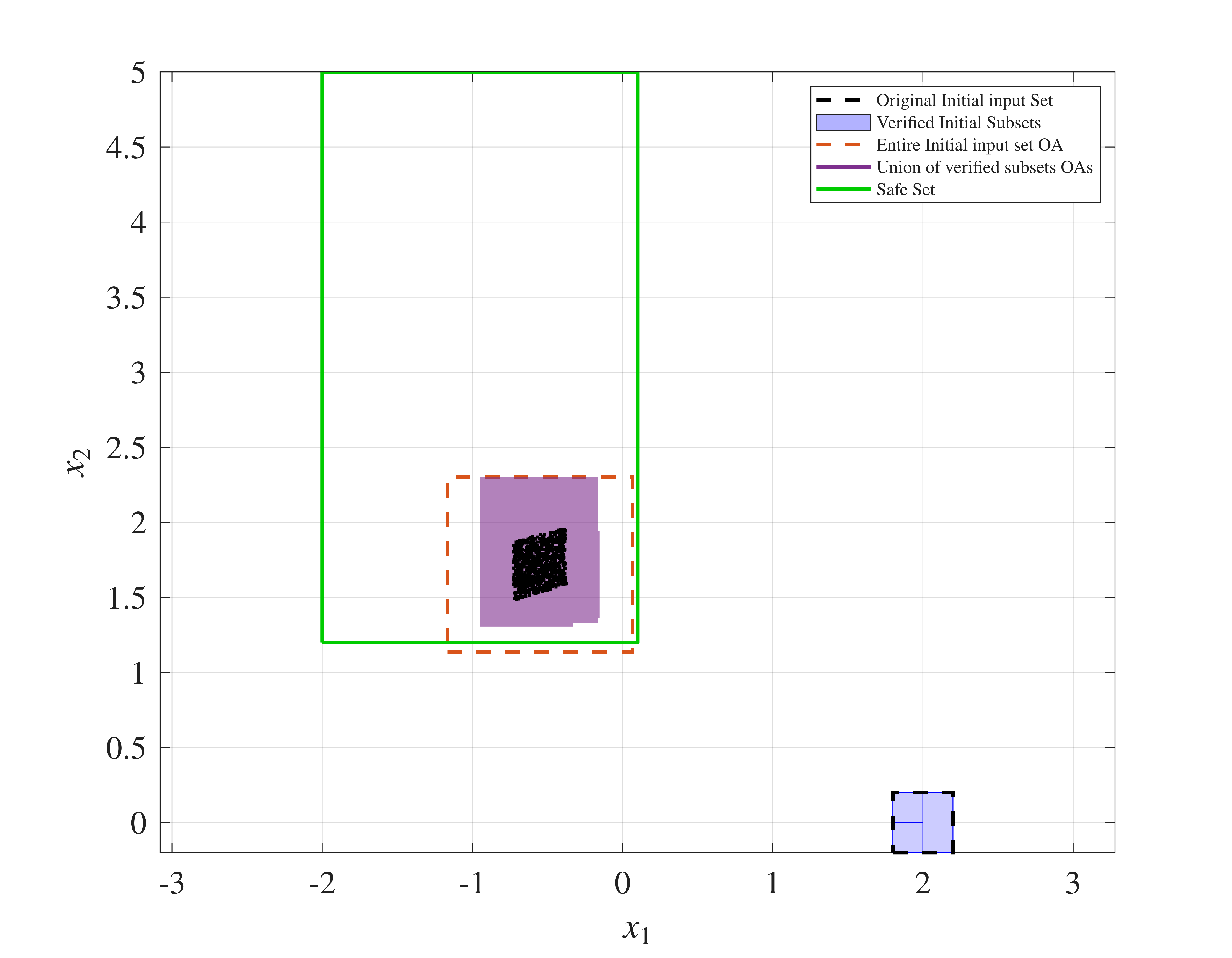} &
\includegraphics[width=0.48\textwidth]{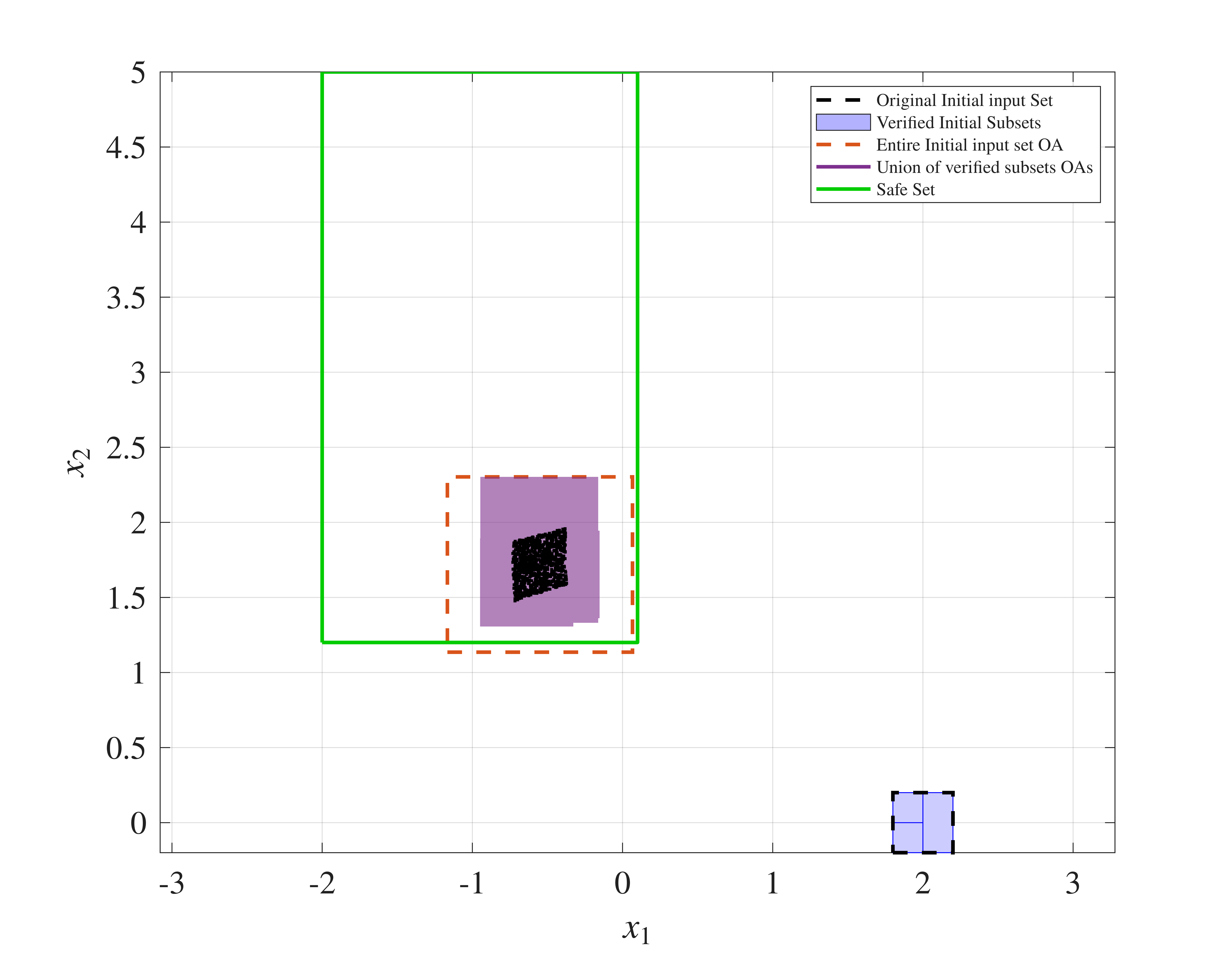} \\
(a) naive & (b) MSIR \\[0.8em]
\multicolumn{2}{c}{\includegraphics[width=0.48\textwidth]{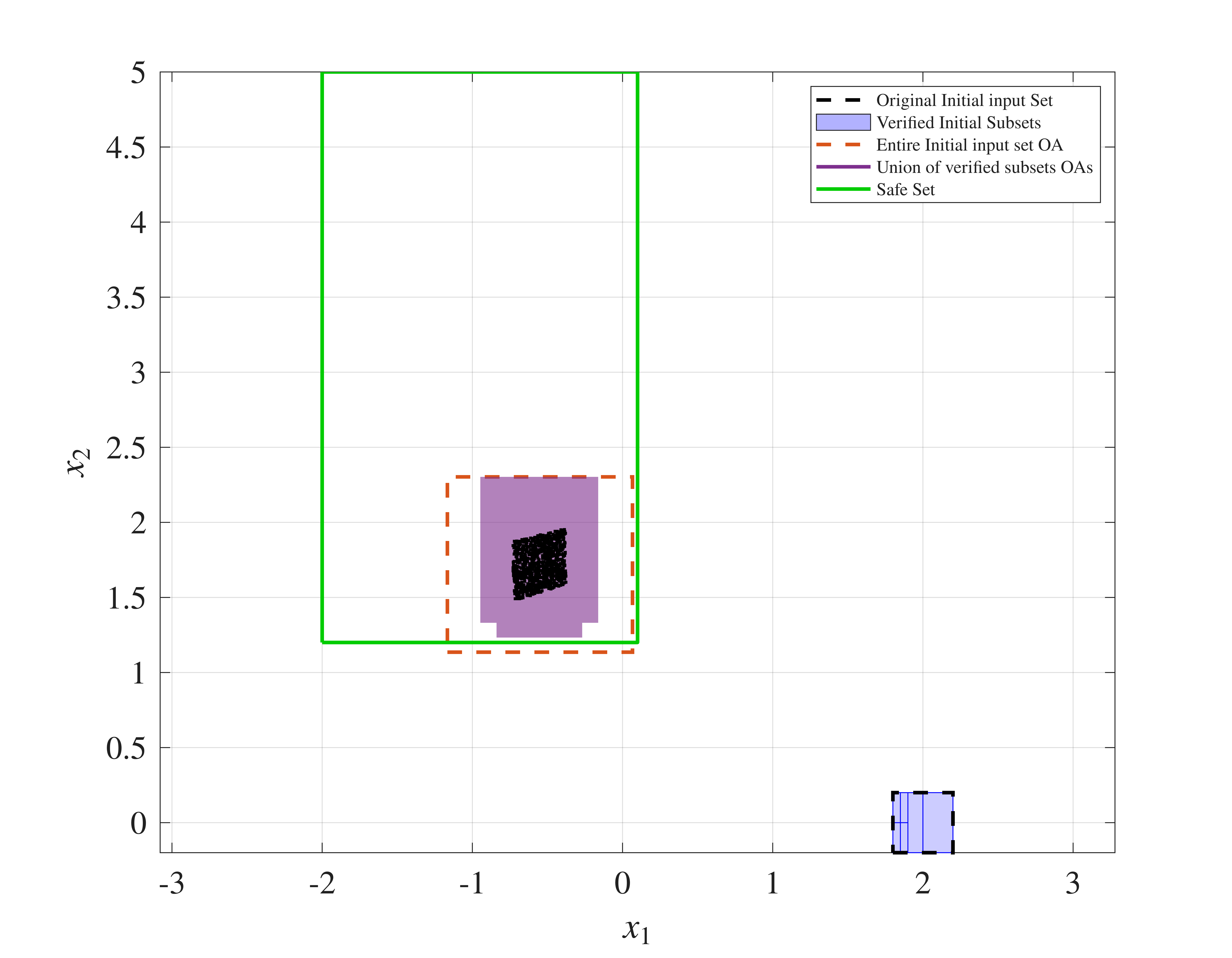}} \\
\multicolumn{2}{c}{(c) ING}
\end{tabular}
\caption{TNODEV verification results on the nonlinear spiral~2D benchmark using naive, MSIR and ING refinement. In all sub figures, the dashed \textcolor{red}{red} rectangle is the over-approximation $\Omega(\Xin)$ from a single reachability call on the unsplit initial set, the \textcolor{Green}{green} rectangle is the safe set $\mathcal{X}_s$, the \textcolor{violet}{violet} filled rectangles represent the union of the verified subsets, and the black dots are sampled successor trajectories.}
\label{fig:Nonlinear_Spiral_results}
\end{figure}

\subsection{FPA}\label{apd:FPA_spec}

The safety specification is:
\begin{itemize}
\item \textit{Initial set:} $\Xin = [-0.01, 0.01] \times [-0.59587, -0.57587] \times [0.79, 0.81] \times [0.51323, 0.53323] \times [0.69, 0.71]$
\item \textit{Safe set:} $\mathcal{X}_s = [-1.29, -1.23] \times [-0.70, -0.59] \times [-0.75, -0.72] \times [0.35, 0.38] \times [2.38, 2.46]$
\end{itemize}

\begin{figure}[H]
\centering
\begin{tabular}{@{}c@{\hspace{0.5em}}c@{}}
\includegraphics[width=0.48\textwidth]{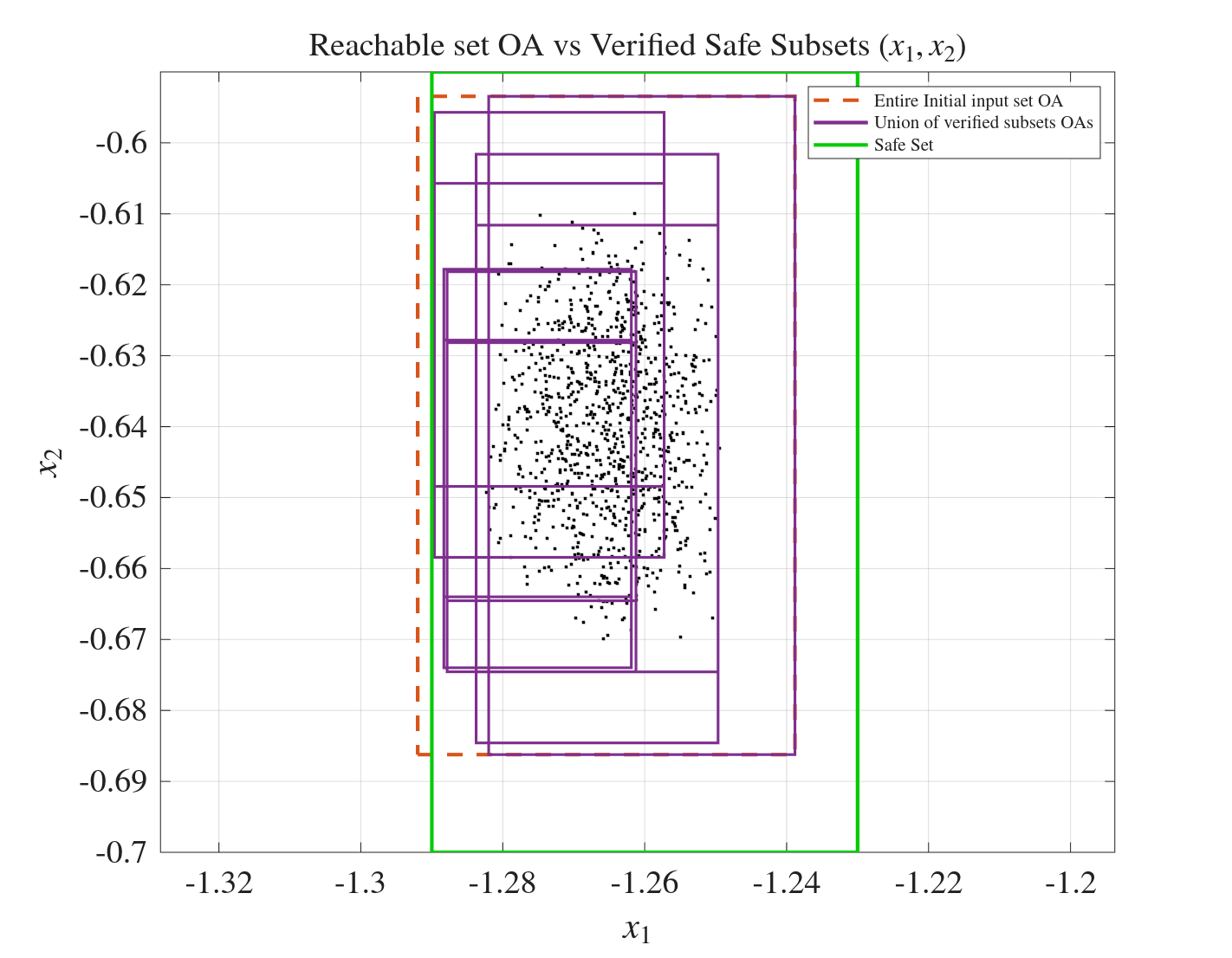} &
\includegraphics[width=0.48\textwidth]{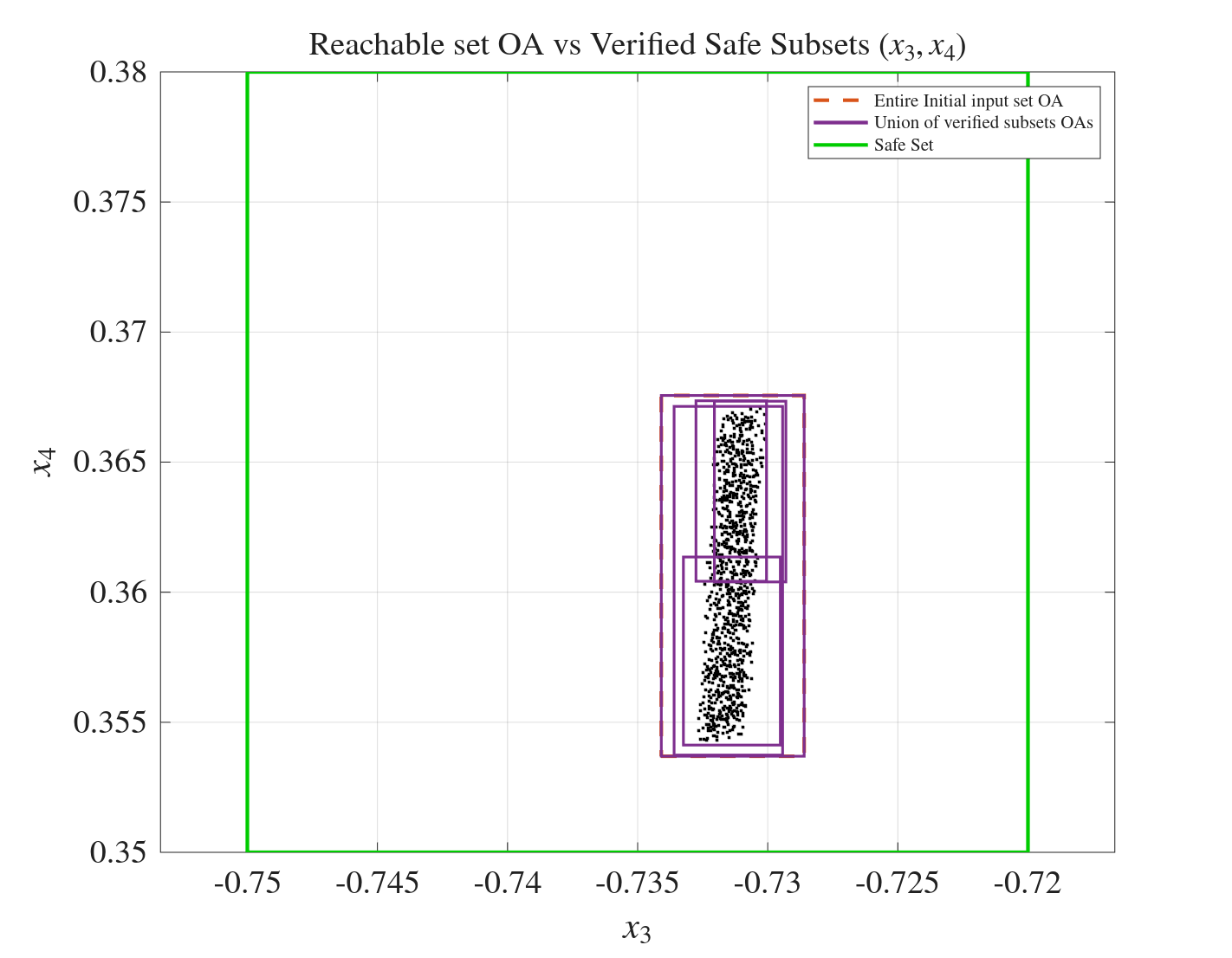} \\
(a) $x_1 - x_2$ & (b) $x_3 - x_4$ \\[0.8em]
\multicolumn{2}{c}{\includegraphics[width=0.48\textwidth]{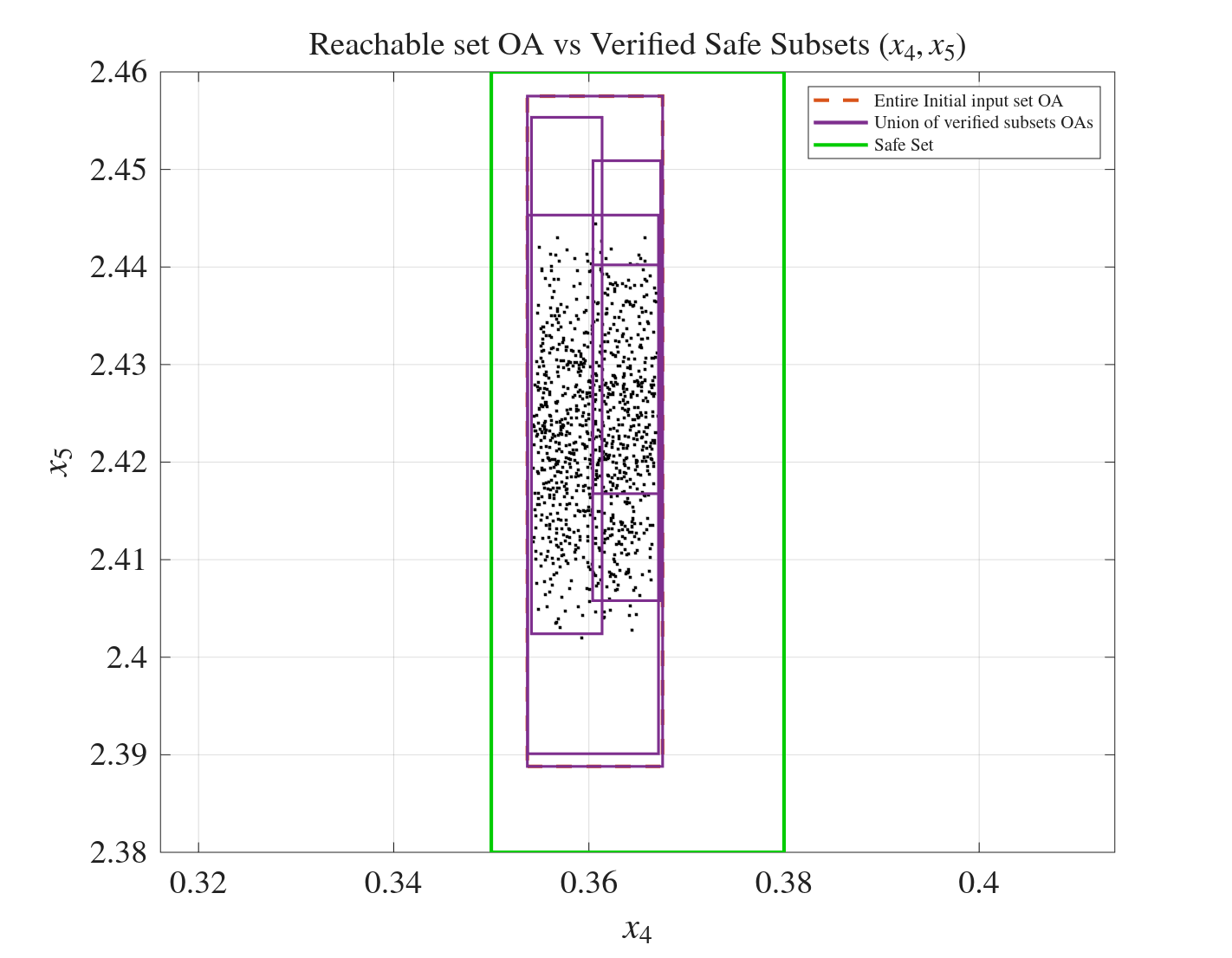}} \\
\multicolumn{2}{c}{(c) $x_4 - x_5$}
\end{tabular}
\caption{TNODEV verification results on the FPA~5D benchmark using naive refinement. In all sub figures, the dashed \textcolor{red}{red} rectangle is the over-approximation $\Omega(\Xin)$ from a single reachability call on the unsplit initial set, the \textcolor{Green}{green} rectangle is the safe set $\mathcal{X}_s$, the \textcolor{violet}{violet} rectangles represent the verified subsets, and the black dots are sampled successor trajectories.}
\label{fig:FPA_naive_results}
\end{figure}

\begin{figure}[H]
\centering
\begin{tabular}{@{}c@{\hspace{0.5em}}c@{}}
\includegraphics[width=0.48\textwidth]{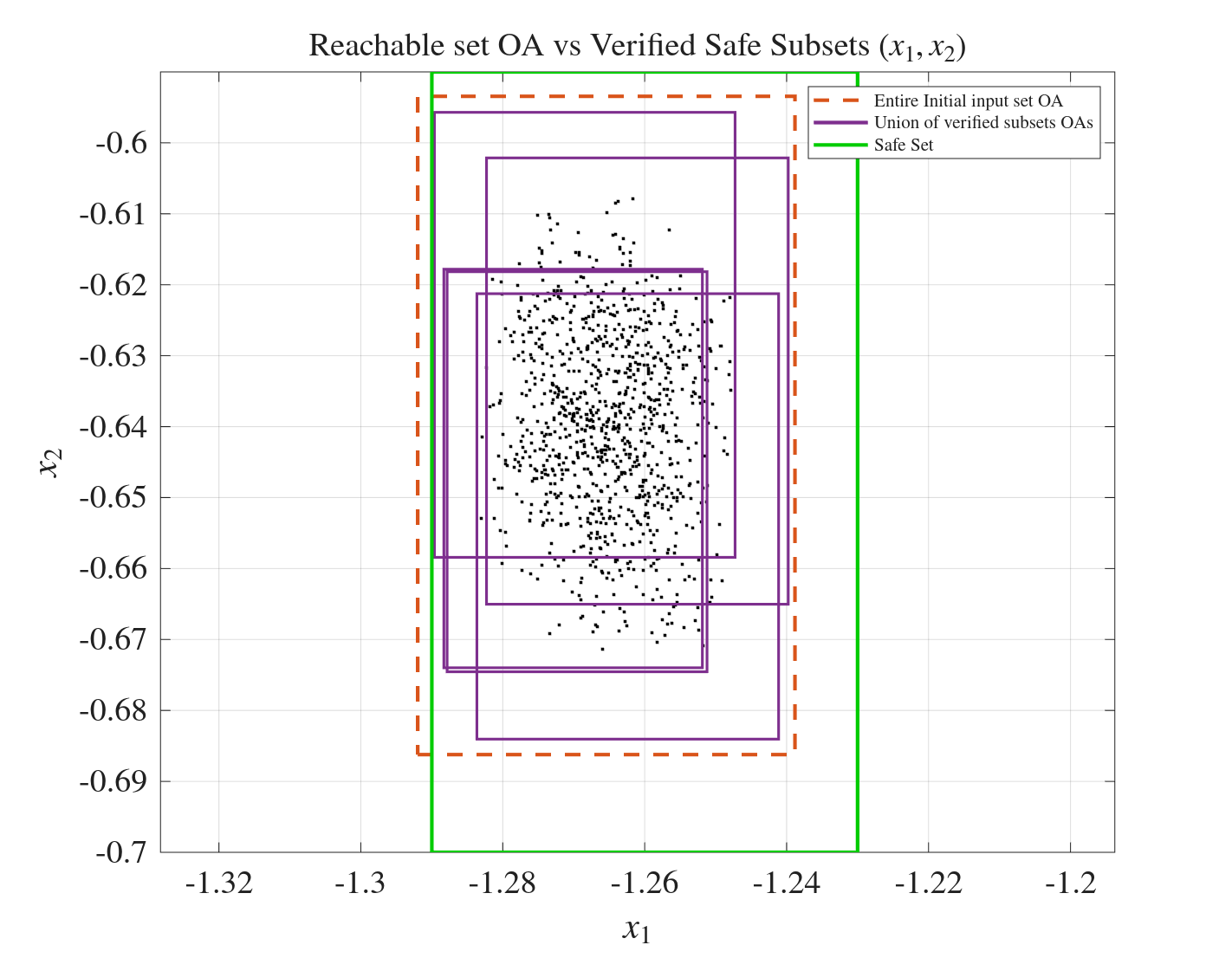} &
\includegraphics[width=0.48\textwidth]{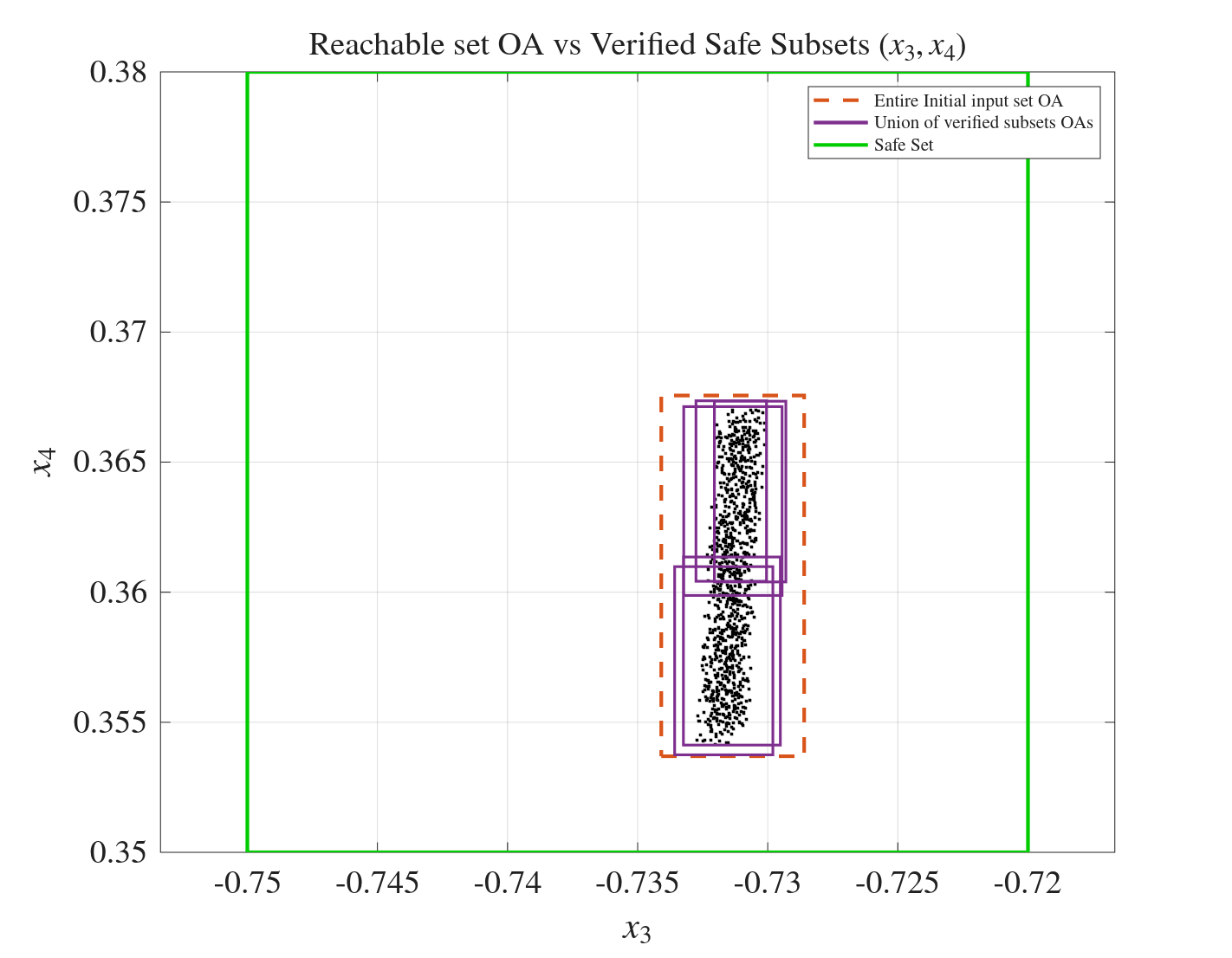} \\
(a) $x_1 - x_2$ & (b) $x_3 - x_4$ \\[0.8em]
\multicolumn{2}{c}{\includegraphics[width=0.48\textwidth]{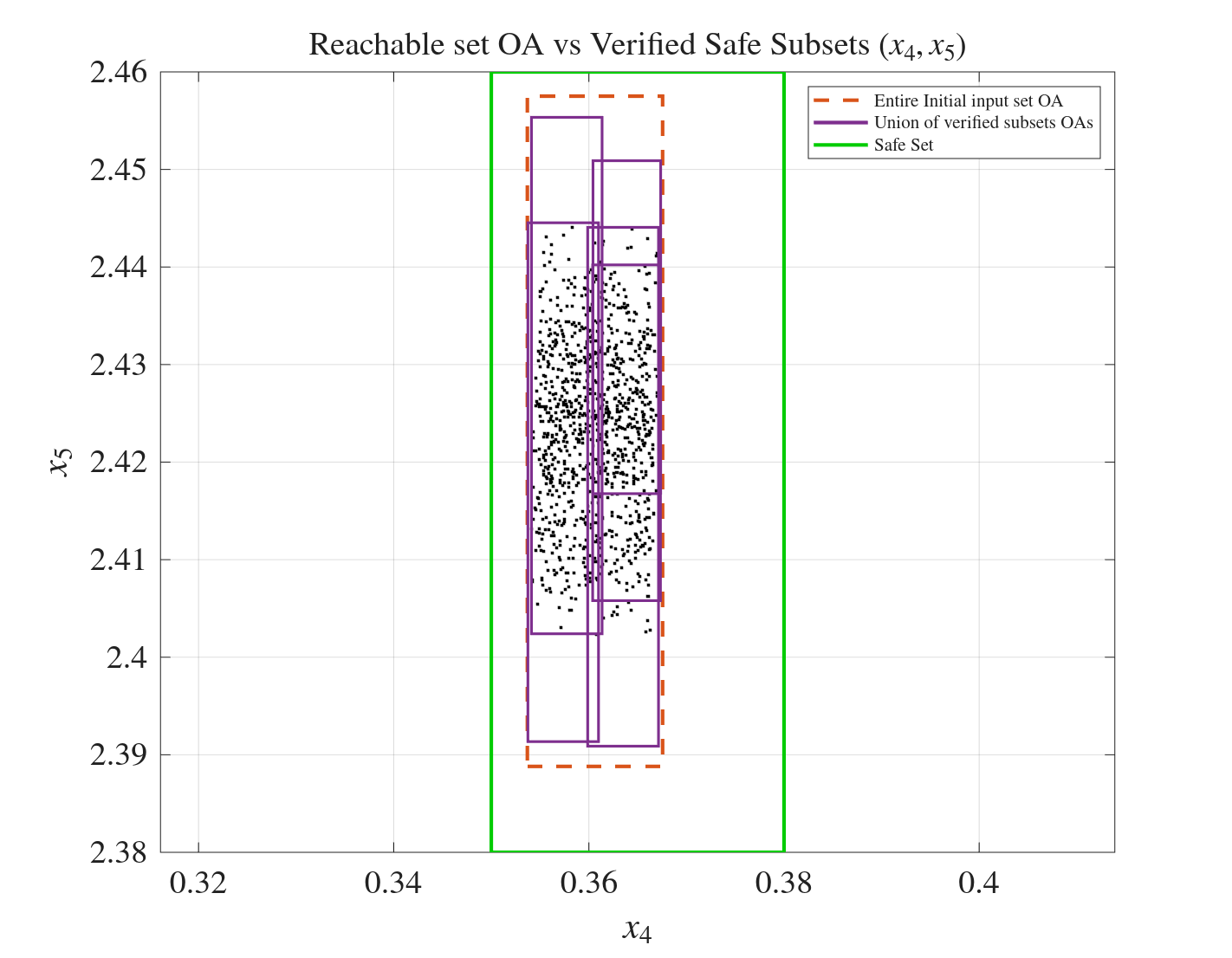}} \\
\multicolumn{2}{c}{(c) $x_4 - x_5$}
\end{tabular}
\caption{TNODEV verification results on the FPA~5D benchmark using MSIR refinement. In all sub figures, the dashed \textcolor{red}{red} rectangle is the over-approximation $\Omega(\Xin)$ from a single reachability call on the unsplit initial set, the \textcolor{Green}{green} rectangle is the safe set $\mathcal{X}_s$, the \textcolor{violet}{violet} rectangles represent the verified subsets, and the black dots are sampled successor trajectories.}
\label{fig:FPA_MSIR_results}
\end{figure}

\begin{figure}[H]
\centering
\begin{tabular}{@{}c@{\hspace{0.5em}}c@{}}
\includegraphics[width=0.48\textwidth]{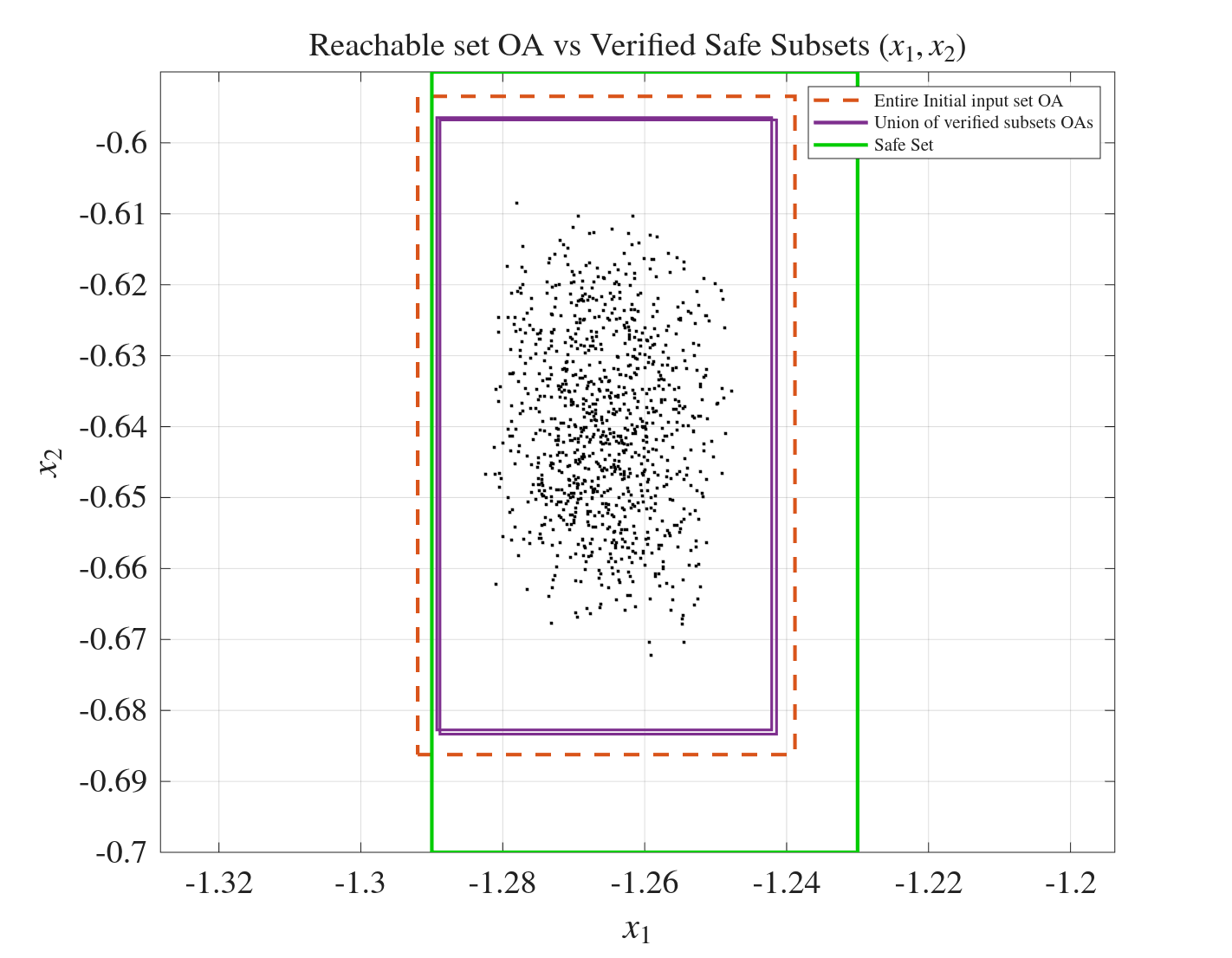} &
\includegraphics[width=0.48\textwidth]{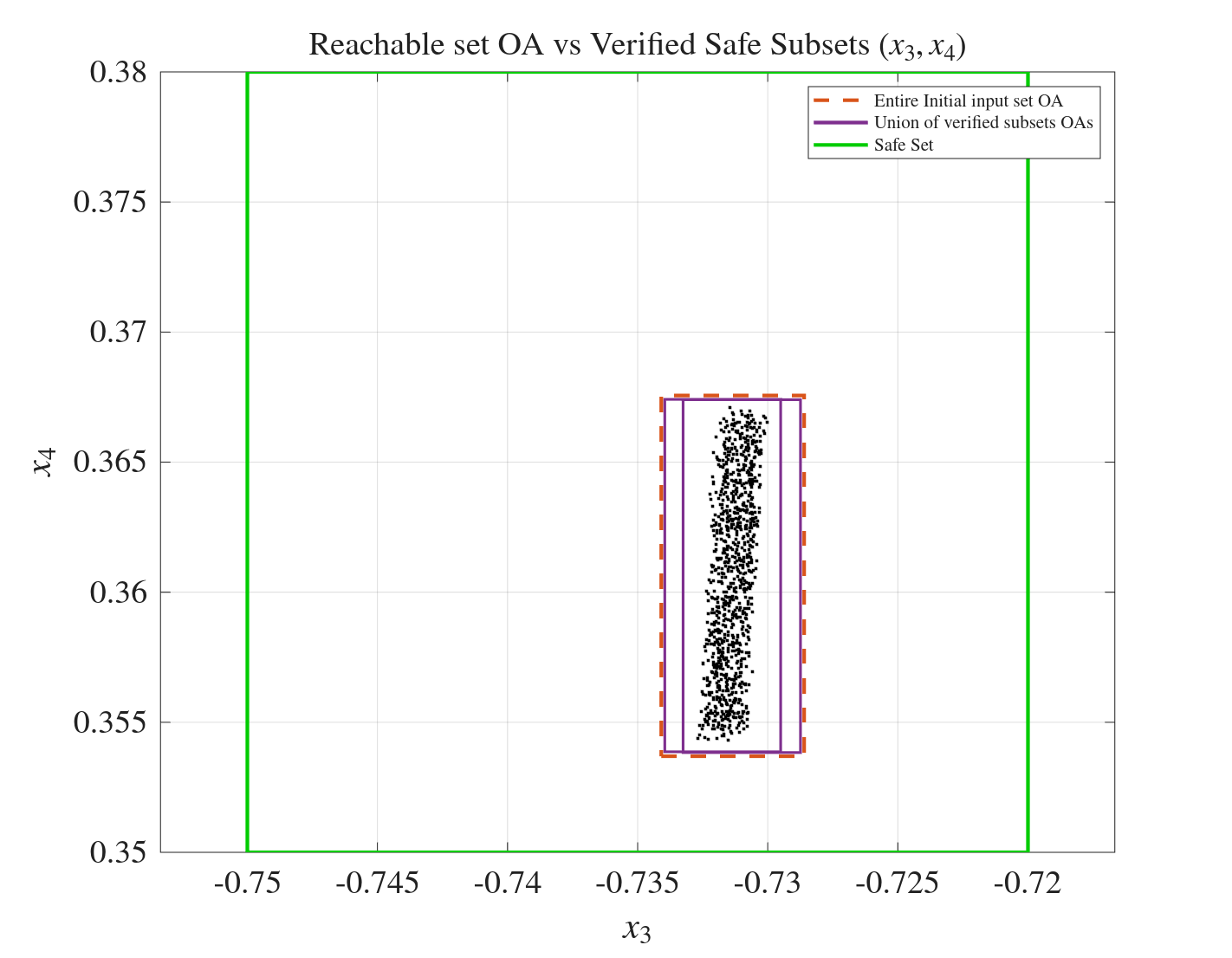} \\
(a) $x_1 - x_2$ & (b) $x_3 - x_4$ \\[0.8em]
\multicolumn{2}{c}{\includegraphics[width=0.48\textwidth]{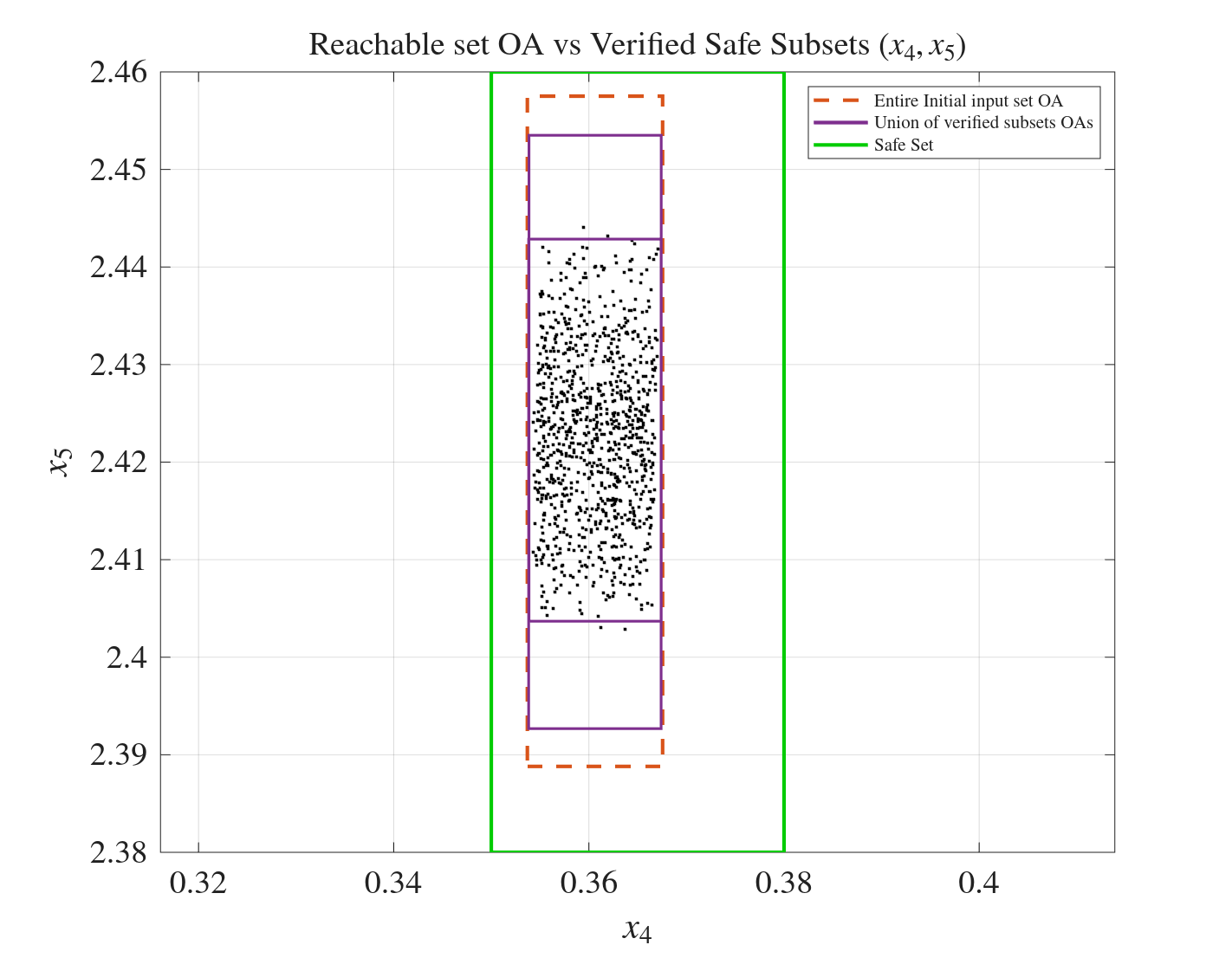}} \\
\multicolumn{2}{c}{(c) $x_4 - x_5$}
\end{tabular}
\caption{TNODEV verification results on the FPA~5D benchmark using ING refinement. In all sub figures, the dashed \textcolor{red}{red} rectangle is the over-approximation $\Omega(\Xin)$ from a single reachability call on the unsplit initial set, the \textcolor{Green}{green} rectangle is the safe set $\mathcal{X}_s$, the \textcolor{violet}{violet} rectangles represent the verified subsets, and the black dots are sampled successor trajectories.}
\label{fig:FPA_ING_results}
\end{figure}

Figures \ref{fig:FPA_naive_results}--\ref{fig:FPA_ING_results} show the verified subsets for the naive, MSIR, and ING heuristics, projected onto the three planes $x_1$--$x_2$, $x_3$--$x_4$, and $x_4$--$x_5$. All three heuristics certify the specification \texttt{SAFE}.

\subsection{Cartpole}\label{apd:Cartpole_spec}

The safety specification is:


\begin{itemize}
\item \textit{Initial set:} $\Xin = [x_0 - 10^{-4},\, x_0 + 10^{-4}]$ with center $x_0 = (0,\, 0,\, 0.001,\, 0,\, 0,\, 0,\, 0,\, 0,\, 0,\, 0,\, 0,\, 0)$, i.e., a small initial pole angle $\theta_0 = 0.001$ with all other states at zero.
\item \textit{Safe set:} $\mathcal{X}_s = [-0.015, 0.015] \times [-0.110, 0.060] \times [-0.015, 0.020] \times [-0.050, 0.160] \times [-0.2, 0.2]^{8}$, bounding the cart position, cart velocity, pole angle, pole angular velocity, and the eight hidden states.
\end{itemize}

\subsection{ACC}\label{apd:ACC_spec}

The safety specification is:
\begin{itemize}
\item \textit{Initial set:} $x_{\text{lead}} \in [90, 110]$, $v_{\text{lead}} \in [31, 33]$, $x_{\text{ego}} \in [10, 11]$, $v_{\text{ego}} \in [29, 31]$, with the remaining plant states initialized as in \citet{manzanas2022nnvode}.
\item \textit{Safe set:} $D_{\text{rel}} \ge D_{\text{safe}}$ with $D_{\text{default}} = 10$~m, $t_{\text{gap}} = 1.4$~s, and $\alpha = 1$, evaluated over the $5$~s horizon.
\end{itemize}
The safe set specification is taken directly from the original ACC benchmark of \citet{tran2020nnv}. The velocity ranges of the initial set are widened relative to the nominal benchmark initialization in order to exercise the verification and refinement loop, as under the nominal narrow initial set, the specification is verified by a single reachability call without triggering any refinements.

\section{Cost analysis of refinement heuristics}
\label{app:refinement_cost}

This appendix details the computational cost of the three refinement heuristics introduced in Section~\ref{subsec:verif_loop}. We measure cost in \emph{reachability calls per refinement decision}, which is the dominant operation in the verifier, since each reachability call integrates a $2n$-dimensional embedded ODE over the specified time horizon and is orders of magnitude more expensive than any other per-cell operation.

\paragraph{Baseline.} A refinement decision is reached only after a failed verification check on the current cell $\mathcal{X}$. By the time refinement runs, the verifier has already spent one reachability call to obtain $\Omega(\mathcal{X})$. The cost statements below count only \emph{additional} calls beyond this baseline.

\paragraph{Naive.} Picks $i^\star = \arg\max_i r_i$, using only the cell's radius vector $r$. The radius is a property of the cell, known without any computation. Cost: no additional reachability calls and no other meaningful computation.

\paragraph{MSIR.} Computes $s_i = r_i \cdot \max_k \max(|\underline{J}_{ki}|, |\overline{J}_{ki}|)$, which requires the radius (already known) and interval Jacobian bounds $[\underline{J}, \overline{J}]$ over the cell. The Jacobian bounds are obtained by a single symbolic pass through the network's analytic Jacobian using interval arithmetic, which is a different operation from a reachability call. To make the contrast more clear: a reachability call evaluates the network's forward pass thousands of times during the ODE integration, whereas computing the interval Jacobian requires only a single interval-bound pass through the network's layers. Cost: no additional reachability calls, and one Jacobian-bounding call per decision.

\paragraph{ING.} Estimates $g_j \approx \partial w / \partial x_j$ by finite differences. Computing $g_j$ requires $w(\mathcal{X}_{j,\delta})$, the over-approximation width of the cell perturbed by $\delta$ along dimension $j$. Since the reachability call evaluates complex, non-linear ODE dynamics numerically over a time interval, the resulting over-approximation lacks a closed-form analytic expression. Therefore, its exact derivative cannot be calculated directly, and $w(\mathcal{X}_{j,\delta})$ can only be obtained by running the full reachability call on the perturbed cell. One such call is needed per input dimension. Cost: $n$ additional reachability calls per cell being refined.

\paragraph{Summary.} Table~\ref{tab:heuristic_cost} summarizes the additional computational overhead required by each heuristic whenever a cell needs to be split:

\begin{table}[h]
\centering
\begin{tabular}{lcc}
\textbf{Heuristic} & \textbf{Extra reach calls} & \textbf{Other cost} \\
\hline
Naive & $0$ & --- \\
MSIR  & $0$ & 1 Jacobian-bounding call \\
ING   & $n$ & --- \\
\end{tabular}
\caption{Cost of each refinement heuristic per refinement decision, in addition to the one reachability call already spent on the failed verification check.}
\label{tab:heuristic_cost}
\end{table}

\paragraph{Practical implications.} Reachability call cost dominates the total verification time, so ING $n$-fold overhead per decision accumulates rapidly across all refinement iterations. Furthermore, our empirical results in Table~\ref{tab:results} show that ING performance is highly inconsistent. Because ING relies on local finite-difference estimates, and it can sometimes find highly efficient splits (as seen in the FPA benchmark). However, this localized view can also lead to poor splitting choices overall, causing the verifier to time out. The effect is highly benchmark-dependent:

\begin{itemize}
\item \textbf{Spiral linear ($n = 2$).} Even at low dimensions, ING completely fails to converge for the case of Spiral linear, after $5000$ iterations ($13.16$ min) with an \texttt{UNKNOWN} verdict. In contrast, Naive and MSIR easily verify the system in around $29$ iterations in $6.5$ seconds. This highlights that ING's local gradient estimates can get stuck in sub-optimal splitting strategies for certain dynamics.
\item \textbf{Spiral nonlinear ($n = 2$).} ING converges but requires significantly more iterations ($11$) and time ($46.95$ s) than both MSIR ($3$ iterations, $10.78$ s) and Naive ($5$ iterations, $15.60$ s).
\item \textbf{FPA ($n = 5$).} This benchmark demonstrates ING's theoretical appeal, as ING achieves the lowest number of required iterations (only $3$, compared to $17$ for Naive and $9$ for MSIR). However, because each of those decisions costs $5$ additional reachability calls, ING's total wall-clock time ($53.78$ s) remains higher than MSIR ($34.50$ s). MSIR ultimately provides the most optimal balance between minimizing iterations and maintaining a low per-decision cost.
\item \textbf{Cartpole ($n = 12$).} MSIR shows its clear scalability advantage here, drastically reducing iterations compared to Naive ($63$ vs. $2037$) and cutting the total verification time from $2.73$ min down to $56.43$ s. Meanwhile, ING's compounding $12\times$ reachacility calls overhead, combined with potentially poor local splits, causes it to reach the refinement  iteration limit.
\item \textbf{ACC ($n = 8$).} Both plant variants are verified \texttt{SAFE} by naive and MSIR, but ING reaches the wall-clock timeout on both without verifying any subset, returning \texttt{UNKNOWN}. This reinforces the pattern that ING's $n$-fold per-decision overhead makes it the least reliable heuristic on the higher-dimensional benchmarks. On the linear variant MSIR is faster than naive ($2.72$ vs $4.78$ min), while on the nonlinear variant MSIR processes fewer iterations than naive ($37$ vs $41$) but does not reduce wall-clock time ($29.61$ vs $20.03$ min), one of the few cases where MSIR iterations reduction do not translate into a time saving.

\end{itemize}

The empirical takeaway is that \textbf{MSIR is situated as the optimal default heuristic} for TNODEV. Its Jacobian-bounding overhead is practically negligible relative to a reachability call, yet it reliably reduces iteration counts and verification times compared to naive heuristic. ING, while conceptually interesting and capable of highly precise splits in specific scenarios (like FPA in Figure \ref{fig:FPA_ING_results}), suffers from prohibitive $n$-fold scaling costs and unpredictable convergence stability (as seen in Spiral linear, Cartpole, and ACC variants), making it impractical for general use and only case by case dependent.

\end{document}